\newtheorem*{rep@theorem}{\rep@title}
\newcommand{\newreptheorem}[2]{%
\newenvironment{rep#1}[1]{%
 \def\rep@title{#2 \ref{##1}}%
 \begin{rep@theorem}}%
 {\end{rep@theorem}}}
\newtheorem{theorem}{Theorem}
\newtheorem{lemma}[theorem]{Lemma}
\newtheorem{proposition}[theorem]{Proposition}
\newtheorem{definition}[theorem]{Definition}
\newtheorem{remark}[theorem]{Remark}
\newcommand\epssim{\stackrel{\mathclap{\epsilon, \alpha}}{\sim}}
\newcommand\cF{\mathcal{F}}
\newcommand\cG{\mathcal{G}}
\newcommand\cS{\mathcal{S}}
\newcommand\cC{\mathcal{C}}
\newcommand{\bR}{\mathbb{R}}
\DeclareMathOperator{\TV}{TV}
\newcommand{\DTV}[2]{\operatorname{TV}\left(#1, #2\right)}
\newcommand{\dist}{\ensuremath{\textnormal{\normalfont Dist}}}
\begin{document}

%

%

\twocolumn[

\aistatstitle{On the Sample Complexity of Learning Sum-Product Networks}


\aistatsauthor{ Ishaq Aden-Ali  \And Hassan Ashtiani }

\aistatsaddress{ McMaster University \\ adenali@mcmaster.ca \And McMaster University  \\zokaeiam@mcmaster.ca}
]
\begin{abstract}
Sum-Product Networks (SPNs)~\citep{Poon2011,darwiche2003} can be regarded as a form of deep graphical models that compactly represent deeply factored and mixed distributions. An SPN is a rooted directed acyclic graph (DAG) consisting of a set of leaves (corresponding to base distributions), a set of sum nodes (which represent mixtures of their children distributions) and a set of product nodes (representing the products of its children distributions). 

In this work, we initiate the study of the sample complexity of PAC-learning the set of distributions that correspond to SPNs. We show that the sample complexity of learning tree structured SPNs with the usual type of leaves (i.e., Gaussian or discrete) grows at most linearly (up to logarithmic factors) with the number of parameters of the SPN.

More specifically, we show that the class of distributions that corresponds to tree structured Gaussian SPNs with $k$ mixing weights and $e$ ($d$-dimensional Gaussian) leaves can be learned within Total Variation error $\epsilon$ using at most $\widetilde{O}(\frac{ed^2+k}{\epsilon^2})$ samples. A similar result holds for tree structured SPNs with discrete leaves. 

We obtain the upper bounds based on the recently proposed notion of distribution compression schemes~\citep{gaussian_mixture_tr}. More specifically, we show that if a (base) class of distributions $\cF$ admits an ``efficient'' compression, then the class of tree structured SPNs with leaves from $\cF$ also admits an efficient compression.

\end{abstract}

\section{Introduction}

A Sum-Product Network (SPN) \citep{Poon2011,darwiche2003} is a type of deep probabilistic model that can represent complex probability distributions. An SPN can be specified by its graphical model which takes the form of a rooted directed acyclic graph (DAG). The leaves of an SPN represent probability distributions from a fixed (simple and often parametric) class such as Bernoulli and Gaussian distributions. Higher-level nodes in the graph correspond to more complex distributions that are obtained by ``combining'' the lower-level distributions.
More specifically, each node of an SPN is either a leaf, a \emph{sum node}, or a \emph{product node}. Each sum/product node represents the mixture/product distribution of its children respectively. The use of sum and product ``operations'' allows for representation of increasingly more complex distributions, all the way to the root. The distribution that an SPN represents is the one that corresponds to its root node. In this work, our focus is on a powerful subclass of SPNs that take the form of rooted trees instead of rooted DAGs. We clarify that our results hold for tree structured SPNs and that for the remainder of this paper we will refer to tree structured SPNs simply as SPNs. See Figure \ref{spnfig} for an example of a simple SPN. 




SPNs can be considered a generalization of mixture models. The alternating use of sum and product operations in SPNs results in representation of highly structured and complex distributions in a concise form. The appealing property of SPNs in encoding deeply factored and mixed distributions becomes more evident when we increase their \emph{depth} -- allowing representation of more complex distributions~\citep{Delalleau,Martens2014}. This property is a major incentive for the use of SPNs in practice~\citep{Delalleau,Martens2014, Poon2011,Rashwan2018b, Zhao2016,Vergari2015,Adel2015}.


A fundamental open problem that we aim to address is characterizing the number of training instances needed to learn an SPN. More specifically, we want to establish the sample complexity of learning an SPN as a function of its depth and the number of its nodes (as well as the sample complexity of learning a single leaf).
In this work, we initiate the study of the sample complexity of SPNs within the standard \emph{distribution learning} framework (e.g.,~\cite{devroye_book}), where we are given an i.i.d. sample from an unknown distribution and we wish to find a distribution that---with high probability---is close to it in Total Variation (TV) distance. 

One important special case of SPNs are Gaussian Mixture Models (GMMs), which can be regarded as SPNs with only one sum node and a number of Gaussian leaves connected to it. Only recently, it has been shown that the number of samples required to learn GMMs is $\widetilde{\Theta}(p/\epsilon^2)$, where $p$ is the number of parameters of the mixture model~\citep{gaussian_mixture_tr}. It is therefore an intriguing question whether this result can be extended to SPNs. 

We will establish an upper bound on the sample complexity of learning tree structured SPNs, affirming that the sample complexity grows (almost) linearly with the number of parameters. As a concrete example, we show that the sample complexity of learning SPNs with fixed structure and Gaussian leaves is at most $\widetilde{O}(p/\epsilon^2)$ where $p$ is basically the number of parameters (the number of edges/weights in the graph plus the number of Gaussian parameters). Similar results also hold for SPNs with other usual types of leaves, including discrete (categorical) leaves.

We prove our results using the recently proposed notion of distribution compression schemes~\citep{gaussian_mixture_tr}. We obtain our sample complexity upper bounds by showing that if a class of distributions, $\cF$, admits a certain form of efficient sample compression, then the set of distributions that correspond to SPNs with leaves from $\cF$ is also efficiently compressible, as long as the number of edges in the SPN is bounded. A technical feature of this result is that the upper bound depends on the number of the edges, but has no extra dependence (e.g., no exponential dependence) on the depth of the SPN.



\subsection{Notation}
For a distribution $f$, the notation $S \sim f^{m}$ means $S$ is an i.i.d sample of size $m$ generated from $f$. We write the set $\{1, 2, \dots, N\}$ as $[N]$, and write $|B|$ to represent the cardinality of the set $B$. The empty set is defined as $\emptyset$ and $\log(\cdot)$ denotes logarithm in the natural base.

\section{The Distribution Learning Framework}

In this short section we formally define the distribution learning framework. A \emph{distribution learning method} is an algorithm that takes as input a sequence of i.i.d. samples generated from an unknown distribution $f$, then outputs (a description of) a distribution $\hat{f}$ as an estimate of $f$. We assume that $f$ is in some class of distributions $\cF$ (i.e., realizable setting) and we require $\hat{f}$ to be a member of this class as well (i.e., proper learning).
Let $f_1$ and $f_2$ be two probability distributions defined over $\bR^d$ and let $\mathcal{B}$ be the Borel sigma algebra over $\bR^d$. The TV distance is defined by
\begin{equation*}
\label{eq:TVdef}
\DTV{f_1}{f_2}
 \: \coloneqq \: \sup_{B \in \mathcal{B}} \int_B \big(f_1(x) - f_2(x)\big) \,\mathrm{d} x = 
 \frac{1}{2}\|f_1 - f_2\|_1 
\end{equation*}
where 
\(\|f\|_1 \coloneqq \int_{\bR^d} |f(x)|\,\mathrm{d} x\)
is the $L_1$ norm of $f$. We define what it means for two distributions to be $\epsilon$-close.

\begin{definition}[$\epsilon$-close]
A distribution $\hat{f}$ is {$\epsilon$-close} to $f$ if $\TV(f,\hat{f}) \leq \epsilon$.
\end{definition}

The following is a formal Probably Approximately Correct (PAC) learning definition for distribution learning with respect to $\cF$.

\begin{definition}[PAC-learning of distributions]\label{def:realizablelearning}
	A distribution learning method is called a PAC-learner for $\cF$ with sample complexity $m_{\cF}(\epsilon, \delta)$ if, for all distributions $f\in\mathcal F$ and all $\epsilon, \delta \in(0,1)$, given $\epsilon$, $\delta$, and an i.i.d.\ sample of size $m_{\cF}(\epsilon, \delta)$ from  $f$, with probability at least $1-\delta$ (over the samples) we have $TV(f, \hat{f}) \leq \epsilon$.
\end{definition}

\section{Main Results}

Here we state our main result regarding the sample complexity of learning SPNs with Gaussian leaves which are the most common forms of continuous SPNs.

\begin{theorem}[Informal]\label{thm:mainresult}
Let $\cS$ be any class of distributions that corresponds to SPNs with the same structure---having $k$ mixing weights and $e$ ($d$-dimensional Gaussian) leaves.
Then $\cS$ can be PAC-learned using at most $$\widetilde{O}\bigg(\frac{ed^2+k}{\epsilon^2}\bigg) $$ samples, where $\widetilde{O}()$ hides logarithmic dependencies on $\epsilon$, $e$, $d$, $k$ and $1/\delta$. 
\end{theorem}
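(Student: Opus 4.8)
The plan is to reduce the theorem to constructing an efficient \emph{distribution compression scheme} for $\cS$ and then to invoke the general result of \citep{gaussian_mixture_tr}, which converts a $(\tau(\epsilon),t(\epsilon),m(\epsilon))$-compression scheme for a class into a PAC-learner using $\widetilde{O}\big((\tau(\epsilon)+t(\epsilon))/\epsilon^2\big)$ samples, where the tilde also absorbs a $\log m(\epsilon)$ factor. Thus it suffices to show that $\cS$ admits a compression scheme with $\tau+t=\widetilde{O}(ed^2+k)$ and $m=\mathrm{poly}(e,d,k,1/\epsilon,1/\delta)$; the polynomial bound on $m$ is what keeps the hidden $\log m$ from mattering. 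The building blocks are: (i) a single $d$-dimensional Gaussian admits a compression scheme of size $\widetilde{O}(d^2)$ \citep{gaussian_mixture_tr}; (ii) compression is closed under products over disjoint coordinate blocks, the sizes adding; and (iii) the $k$-mixture structure contributes only (rounded) mixing weights, a few bits each.

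For the construction, fix the SPN structure and, after contracting single-child nodes, assume every internal node has at least two children, so that both the depth and the number of internal nodes are $O(e)$. View a sample $x$ from the root distribution as generated top-down: at each sum node a child is chosen according to its weights, at each product node every child is retained and its (disjoint) coordinate block is sampled independently; so $x$ induces a latent path to each leaf it reaches, and conditioned on reaching leaf $\ell$, the block $x_{V_\ell}$ is an independent draw from $f_\ell$. Let $\pi_\ell$ denote the product of the sum-node weights along the root-to-$\ell$ path. The encoder draws $m=\widetilde{O}(ed^2/\epsilon)$ i.i.d.\ samples; it \emph{prunes} every leaf with $\pi_\ell<\epsilon/(4e)$ (replacing $f_\ell$ by a fixed default distribution), and for each surviving leaf it runs the Gaussian compression scheme of \citep{gaussian_mixture_tr} on the subsample routed through $\ell$, which (with high probability) is a clean i.i.d.\ sample from $f_\ell$ of size $\approx\pi_\ell m\ge\widetilde{O}(d^2)$; it also rounds every sum weight to precision $\epsilon/(4k)$. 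It returns the union of the full-dimensional sampled points used across the surviving leaves, the per-leaf side information of the Gaussian scheme, the pruning flags, and the rounded weights. Knowing the structure, the decoder rebuilds each surviving leaf from its designated (projected) subsample, uses the default for pruned leaves, and composes everything via the sum and product operations.

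The error is controlled by two standard inequalities applied along the tree: $\DTV{\prod_i f_i}{\prod_i \hat f_i}\le\sum_i\DTV{f_i}{\hat f_i}$ at product nodes, and $\DTV{\sum_i w_i f_i}{\sum_i \hat w_i\hat f_i}\le\tfrac12\sum_i|w_i-\hat w_i|+\max_i\DTV{f_i}{\hat f_i}$ at sum nodes. Unwinding the recursion, the error at the root is bounded by $\sum_\ell\pi_\ell\,\epsilon_\ell$ over leaves, plus a similarly path-weighted sum of the weight-rounding errors over sum edges, plus $\sum_{\text{pruned }\ell}\pi_\ell$; since every $\pi\le1$ and there are $e$ leaves and $k$ sum edges, these three groups are at most $\epsilon/2$, $\epsilon/4$, and $\epsilon/4$ with the chosen parameters. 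Crucially, multiplying by a path weight never \emph{amplifies} an error, which is why nothing exponential in the depth appears. For the size, each surviving leaf contributes $\widetilde{O}(d^2)$ and there are at most $e$ of them, giving $\widetilde{O}(ed^2)$; the rounded weights cost $O(\log(k/\epsilon))$ bits per sum edge, giving $\widetilde{O}(k)$; and the flags and sample-index bookkeeping are lower order. Hence $\tau+t=\widetilde{O}(ed^2+k)$ with $m=\mathrm{poly}$, and plugging this into the compression-to-learning theorem of \citep{gaussian_mixture_tr} yields the claimed $\widetilde{O}\big((ed^2+k)/\epsilon^2\big)$ bound. Replacing the Gaussian leaf scheme with a compression scheme for discrete leaves gives the analogous statement for that case.

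The step I expect to be the main obstacle is exactly the one flagged in the abstract: keeping $m(\epsilon)$ polynomial, i.e., preventing the depth from entering the final bound. A naive approach that simply composes the mixture- and product-closure lemmas inflates $m$ by a $\mathrm{poly}(k,1/\epsilon)$ factor \emph{per level} and is therefore exponential in the depth; the single-global-batch-plus-pruning construction above avoids this, but one has to verify carefully that (a) the subsample routed through a surviving leaf really is clean and i.i.d., so the plain leaf compression applies, (b) reusing one physical sample across the disjoint-scope children of a product node neither breaks independence (it does not, by decomposability) nor double-counts toward $\tau$, and (c) a union bound over the $O(e)$ leaves and sum nodes preserves the overall success probability. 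A secondary, purely bookkeeping, difficulty is splitting the error and confidence budgets so that pruning low-weight branches is genuinely free.
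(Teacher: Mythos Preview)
Your approach is essentially identical to the paper's: construct a compression scheme for the fixed-structure SPN class by compressing each non-negligible leaf via the Gaussian scheme of \citep{gaussian_mixture_tr}, discretizing the mixing weights with an $\epsilon/k$-scale net, handling negligible leaves arbitrarily, and then invoking the compression-implies-learning theorem; the error analysis is the same structural induction using the product and sum TV inequalities, and the key observation that path weights never amplify errors is exactly what prevents depth dependence in both arguments. One small correction: the sum-node inequality you need in order to obtain the path-weighted unwinding $\sum_\ell \pi_\ell\,\epsilon_\ell$ (and, crucially, so that a pruned leaf contributes only $\pi_\ell$ rather than $1$) is the weighted-average form $\TV(\sum_i w_i f_i,\sum_i \hat w_i \hat f_i)\le\sum_i w_i\,\TV(f_i,\hat f_i)+\tfrac12\sum_i|w_i-\hat w_i|$, not the $\max_i$ form you stated---this is precisely the bound the paper uses, and the remainder of your argument already presumes it.
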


The parameters of an SPN consist of the mixing weights of the sum nodes and the parameters of the leaves. The number of parameters of a $d$-dimensional Gaussian is $d^2$, so our upper bound is nearly linear in the total number of parameters of the SPN (i.e., $ed^2+k$).

One of the technical aspects of our upper bound is that it depends on the structure of SPN only through $e$ and $k$. In other words, the upper bound will be the same for learning deep vs. shallow structured SPNs as long as they have the same number of mixing weights and leaves. This result motivates the use of deeper SPNs from the information-theoretic point of view---especially given the fact that deeper SPNs can potentially encode a distribution much more efficiently~\citep{Delalleau,Martens2014}.


\begin{remark}[Tightness of the upper-bound] It is known~\citep{gaussian_mixture_tr} that the sample complexity of learning mixtures of $k$ $d$-dimensional Gaussians is at least $\widetilde{\Omega}(kd^2/\epsilon^2)$. Given the fact that GMMs are special cases of SPNs, we can conclude that our upper bound cannot be improved in general (i.e., it can nearly match the lower bound, e.g., for the special case of GMMs). However, it might still be possible to refine the bound by considering additional parameters, which is the subject of future research.

\end{remark}

The approach that we use in this paper is quite general and allows us to investigate SPNs with other types of leaves, including discrete leaves. 
In fact, studying SPNs with discrete leaves is a simpler problem than those with Gaussian leaves. Here we state the corresponding result for discrete SPNs.

\begin{theorem}[Informal]\label{thm:discretemainresult}
Let $\cS$ be any class of distributions that corresponds to SPNs with the same structure---having $k$ mixing weights and $e$ discrete leaves of support size $d$.
Then $\cS$ can be PAC-learned using at most $$\widetilde{O}\bigg(\frac{ed+k}{\epsilon^2}\bigg) $$ samples, where $\widetilde{O}()$ hides logarithmic dependencies on $\epsilon$, $e$, $d$, $k$, and $1/\delta$. 
\end{theorem}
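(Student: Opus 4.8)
The plan is to obtain Theorem~\ref{thm:discretemainresult} along the same lines as the Gaussian case (Theorem~\ref{thm:mainresult}): produce a \emph{distribution compression scheme} for the class $\cS$ --- a (randomized) encoder that, from $m$ i.i.d.\ samples of the target, outputs at most $\tau$ of those samples together with $t$ extra bits, and a decoder that reconstructs from this a distribution $\epsilon$-close in $\TV$ with constant probability. As shown in \citet{gaussian_mixture_tr}, a $(\tau(\epsilon),t(\epsilon),m(\epsilon))$-compression scheme implies a PAC learner with sample complexity $\widetilde{O}\!\big(m(\epsilon)+(\tau(\epsilon)\log m(\epsilon)+t(\epsilon))/\epsilon^2\big)$, so it suffices to exhibit one for $\cS$ with $\tau=0$, $t=\widetilde{O}(ed+k)$, and $m=\widetilde{O}((ed+k)/\epsilon^2)$. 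This proceeds in two steps: compress a single discrete leaf, then lift leaf compression to the whole SPN.

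\emph{Compressing one discrete leaf.} Given $N$ i.i.d.\ samples from a distribution $p$ on $[d]$, the encoder sends no sample points and simply writes down the empirical distribution $\widehat p$; since its $d$ frequencies are multiples of $1/N$ this costs only $t=O(d\log N)$ bits, and the decoder returns $\widehat p$. A short computation gives $\mathbb{E}\,H^2(p,\widehat p)=O(d/N)$ (squared Hellinger), so $N=\widetilde{O}(d/\epsilon^2)$ samples already make $\widehat p$ $\epsilon$-close in $\TV$. This is exactly where discrete leaves are easier than Gaussian ones: here $\tau=0$, whereas compressing a $d$-dimensional Gaussian mean needs about $d$ sample points, and the bit count is immediate rather than requiring a covariance-compression argument. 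I will carry the leaf guarantee in the parametric form ``squared-Hellinger error $\widetilde{O}(d/N)$ from $N$ samples'', which is what the next step consumes.

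\emph{Lifting to the SPN (the heart).} Build $\widehat{\cS}$ by replacing every leaf with its compressed version and every sum node's weight vector $w$ with a rounding $\widehat w$ to a fixed grid of resolution $\Theta(\epsilon^2/k^2)$ ($\widetilde{O}(r)$ bits for an $r$-ary sum node, $\widetilde{O}(k)$ bits total). Bound the root error by induction over the tree, but measured in \emph{squared Hellinger} rather than $\TV$, because $H^2$ is weighted-subadditive through \emph{both} node types: $H^2(\prod_i g_i,\prod_i\widehat g_i)\le\sum_i H^2(g_i,\widehat g_i)$ for a product node over disjoint scopes, and $H^2(\sum_i w_i g_i,\sum_i w_i\widehat g_i)\le\sum_i w_i H^2(g_i,\widehat g_i)$ for a sum node (Cauchy--Schwarz on the Hellinger affinities), while the weight rounding contributes an extra term with $H\le\sqrt{\TV}\le\sqrt{\tfrac12\|w-\widehat w\|_1}$ that folds in using that $H=\sqrt{H^2}$ is a metric. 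Unrolling the recursion yields
\[
H(\mathrm{root})\;\le\;\Big(\textstyle\sum_{\ell}\pi(\ell)\,H^2_\ell\Big)^{1/2}+\textstyle\sum_{v}\sqrt{\pi(v)}\,\gamma_v,
\]
where $\ell$ ranges over leaves and $v$ over sum nodes, $\pi(\cdot)$ is the product of the sum-edge weights on the path down from the root (so every $\pi\le1$ and $\sum_\ell\pi(\ell)\le e$), $H^2_\ell$ is the leaf error, and $\gamma_v$ the weight-rounding term. Every coefficient is $\le1$, so the bound depends on $e$ and $k$ but has \emph{no} dependence on the depth. Each leaf $\ell$ receives about $N\pi(\ell)$ of the $N$ samples as i.i.d.\ draws from its own distribution, so by the previous step $\pi(\ell)H^2_\ell=\widetilde{O}(\pi(\ell)\cdot d/(N\pi(\ell)))=\widetilde{O}(d/N)$ (leaves reached too rarely to get samples contribute at most $2\pi(\ell)$, whose sum over leaves is also $\widetilde{O}(ed/N)$), and the fine weight grid makes $\sum_v\sqrt{\pi(v)}\gamma_v$ at most $\epsilon/4$; hence $N=\widetilde{O}(ed/\epsilon^2)$ forces $H(\mathrm{root})=O(\epsilon)$, i.e.\ $\TV\le\epsilon$ after adjusting constants. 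The resulting scheme has $\tau=0$, $t=\widetilde{O}(ed)+\widetilde{O}(k)=\widetilde{O}(ed+k)$, and $m=\widetilde{O}((ed+k)/\epsilon^2)$, and the compression-to-learning bound then gives the claimed $\widetilde{O}((ed+k)/\epsilon^2)$.

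\emph{Main obstacle.} The one genuinely delicate point is forcing the error to accumulate \emph{additively} over the tree rather than multiplicatively in the depth. Propagating $\TV$ through product nodes via the triangle inequality already loses a $\mathrm{poly}(e)$ factor, and once one accounts for how few of the $N$ samples reach a deep, low-weight leaf this degrades further (by another $1/\epsilon$), which would break the target bound; recasting the whole recursion in squared Hellinger --- which contracts like a weighted average through both sum and product nodes --- is what prevents this. Two lesser issues are (a) making the sum-node weight-rounding error combine with the Hellinger recursion without a $2^{\mathrm{depth}}$ blow-up, handled by Hellinger being a metric together with all path weights being $\le1$; and (b) splitting the $N$ samples among leaves, which requires the encoder to be given the latent sum-node choices of each sample --- permissible in the compression framework (exactly as for ordinary mixtures in \citet{gaussian_mixture_tr}) and invisible to the decoder. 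Each of these was already resolved for Gaussian leaves, and all are easier here since the discrete-leaf scheme uses no sample points.
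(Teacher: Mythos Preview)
Your proposal is correct in its conclusion and the Hellinger recursion you outline can be made to work, but it takes a substantially more complicated route than the paper's, and the complication is self-inflicted.

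The paper exploits a fact you never use: in the compression framework the \emph{encoder knows the true distribution}. For a discrete leaf $p\in\Delta_d$ the encoder therefore need not look at any samples at all; it simply rounds $p$ to an $(\epsilon/d)$-net in $\ell_\infty$, which costs $O(d\log(d/\epsilon))$ bits and yields a $(0,\,O(d\log(d/\epsilon)),\,0)$ compression for the leaf class. Plugging this zero-sample leaf scheme into the SPN compression theorem (Theorem~\ref{thm:strongspncompression}) gives a $(0,\,O(ed\log(dn/\epsilon))+k\log(k/\epsilon),\,0)$ scheme for the whole SPN, and Theorem~\ref{thm:compressionimplieslearning} immediately yields $\widetilde O((ed+k)/\epsilon^2)$. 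There is no ``how many samples reach a low-weight leaf'' issue and hence no need for Hellinger; the plain TV recursion of Lemma~\ref{lem:epsalphesim} (which is additive over both sums and products, losing only a factor of $n$ in the per-leaf accuracy and thus only log factors in the bit count) suffices.

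Your detour arises because you encode the \emph{empirical} distribution at each leaf, which forces $m>0$ and couples the leaf error to $\pi(\ell)$. You then correctly observe that the TV recursion combined with this sample-dependent leaf error would pick up an extra $1/\epsilon$, and you repair this with the squared-Hellinger recursion, whose weighted-subadditivity through sum nodes makes the $\pi(\ell)$'s cancel in $\sum_\ell \pi(\ell)H_\ell^2$. That recursion is sound (the ``unrolling'' follows from the $\ell_2$ triangle inequality applied at each node, not merely from path weights being at most $1$), and your final parameters $(\tau,t,m)=(0,\widetilde O(ed+k),\widetilde O(ed/\epsilon^2))$ do give the claimed bound. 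So your argument works, and the Hellinger idea is a genuinely useful tool when leaf compression \emph{must} consume samples; but for discrete leaves the paper's direct discretization removes the obstacle at its source and makes the proof a two-line combination of Lemma~\ref{lem:compressioncategory}, Theorem~\ref{thm:strongspncompression}, and Theorem~\ref{thm:compressionimplieslearning}.
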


\section{Sum Product Networks}
We begin this section by defining mixture and product distributions which are the fundamental building blocks of SPNs. As we are working with absolutely continuous probability measures we sometimes use distributions and their density functions interchangeably. Let 
\(
\Delta_k \coloneqq \{\: (w_1,\dots,w_k) \in \mathbb{R}^k \,:\, w_i\geq 0 ,\, \sum w_i=1 \:\}
\)
denote the $k$-dimensional simplex. 

\begin{definition}[Mixture distribution]
	Let $f_1,\dots,f_k$ be densities over domain $Z$. We call $f$ a $k$-mixture of $f_i$'s if it can be written in the following form 
	$$
	f ~\coloneqq~ \textstyle\sum_{i=1}^{k} w_{i}f_{i}
	$$
where $(w_1,\dots,w_k)\in \Delta_k$ are mixing weights. 
\end{definition}

\begin{definition}[Product distribution]
	Let $p_1,\dots,p_d$ be densities over domains $Z_1,\dots,Z_d$. Then a product density, $p$, over $\prod_{i=1}^{d} Z_i$ is defined by
$
p ~\coloneqq~ \prod_{i=1}^{d} p_i 
$
\end{definition}

\subsection{SPN Signatures}

In this subsection, we introduce the notion of \emph{SPN signatures} which help defining SPNs more formally. SPN signatures can be thought of as a recursive syntactic representation of SPNs. We find this syntactic representation useful in improving the clarity and preciseness of the statements that we will make in our main results and their proofs.

We now define base signatures which will later allow us to recursively define SPN signatures. Let $\cF$ denote a (base) class of distributions that are defined over $\mathbb{R}^d$. In the following, we define the set of ``base signatures'' which basically correspond to the leaves of SPNs. 

\begin{definition}[Base signatures]
The set of base signatures formed by $\cF$ over $\mathbb{R}^n$ is defined by the following set of tuples $$T_{\cF}^n \coloneqq \{(f,b): b \subset [n], |b|=d, f \in \cF\}$$
\end{definition}

The first element of each signature (tuple) is a symbol that represents a distribution in $\cF$. The second element of the tuple represents the subset of dimensions that the domain of $f$ is defined over. For example, the tuple $(f,\{1,2,5\})$ represents a distribution $f$ that is defined over the first, third and fifth dimensions of $\mathbb{R}^n$ (i.e., $d=3$ is dimensionality of the domain of $f$, and $n>d$ is the dimensionality of the domain of the whole SPN). The set $b$ is commonly referred to as the \emph{scope} of the distribution. 
An example of a set of base signatures are the base signatures formed by the class of $d$-dimensional Gaussians, $\cG$, given by $T_{\cG}^n = \{(g,b):b\subset [n], |b|=d, g \in \cG\}$. 

We are now ready to define SPN signatures. In fact, SPN signatures are ``generated'' recursively from the base signatures, by either taking the product or mixtures of the existing signatures.

\begin{definition}[SPN signatures]
Given a set of base signatures $T_{\cF}^n$, we (recursively) define the set of SPN signatures generated from $T_{\cF}^n$ -- denoted by $S(T_{\cF}^n)$ -- to consist of the following tuples:
\begin{enumerate}
    \item $T_{\cF}^n \in S(T_{\cF}^n)$
    \item If $\exists (z_{1},b_{1}), \dots ,(z_{k},b_{k}) \in S(T_{\cF}^n)$ and\\ $\forall i \neq j$ $b_{i} \cap b_{j} = \emptyset$, then \\ $\big(\left((z_{1},b_1)\times \dots \times (z_{k},b_k)\right),\bigcup\limits_{i=1}^{k} b_{i}\big) \in S(T_{\cF}^n)$
    \item If $\exists (z_{1},b), \dots ,(z_{k},b) \in S(T_{\cF}^n)$ and \\ $(w_{1}, 
    \dots, w_{k}) \in \Delta_{k}$, then \\ $\big(\left(w_{1}(z_{1},b_1) + \dots + w_{k}(z_{k},b_k)\right),b\big) \in S(T_{\cF}^n)$
\end{enumerate}
\end{definition}

The first rule of the above definition states that SPN signatures include the corresponding base signatures. The second rule defines new signatures based on the product of the existing ones. 
The resulting signature, $((z_{1}, b_1)\times \dots \times (z_{k}, b_k))$, is a string that is the concatenation of a number of substrings (i.e., each $z_i$ and $b_i$) and a number of symbols ($'\times'$, $'('$ and $')'$) in the given order. The second element of the tuple (i.e., $\cup b_i$) keeps track of the dimensions over which the signature is defined. Similarly, the third rule defines signatures based on the mixture of the existing signatures ($w_{i}$ are the string representation of a mixing weight).

One can take an SPN signature and create its corresponding visual (graph-based) representation based on the sum and product rules. See Figure~\ref{spnfig} for an example of an SPN and its corresponding signature. We will often switch between referring to an SPN as a distribution or as a rooted tree, and it will be clear what we are referring to from the context. 



\begin{figure}[h]
    \centering
    \includegraphics[width=0.95\linewidth]{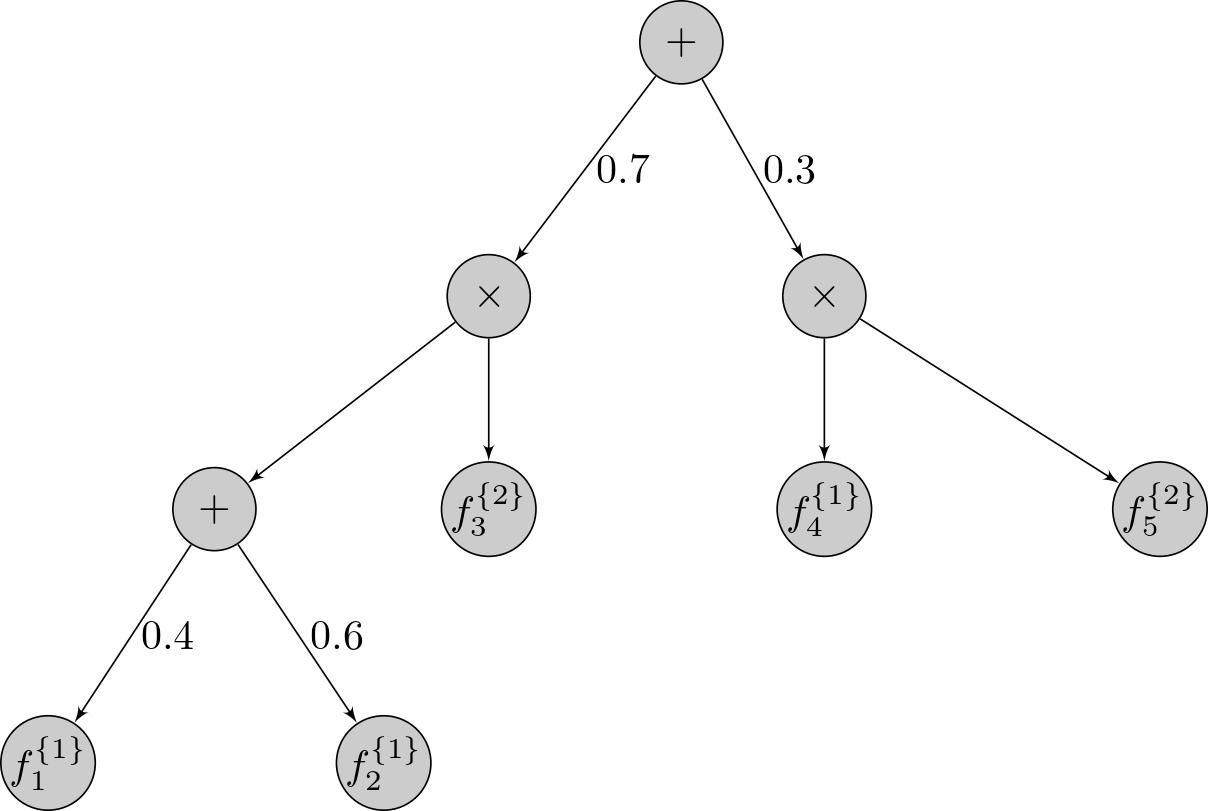}
    \caption{This figure depicts a simple SPN. Each leaf in this SPN represents a simple univariate distribution. To save space, the scope of each leaf is written in the superscript. Each + node represents a mixture distribution of its children, and its mixing weights are shown on the outgoing edges to its children. Each $\times$ node represents the product distribution of its children.
    The root node corresponds to the ultimate distribution that the SPN represents. The corresponding SPN signature for this SPN is the following: $\big( 
    \big(0.7 ( ( 0.4(f_1,\{1\})+0.6(f_2,\{1\}) ) \times (f_3,\{2\}) ) +\\  0.3 ( (f_4,\{1\}) \times (f_5,\{2\}) )\big)
    ,~\{1,2\}\big)$.}
    \label{spnfig}
\end{figure}

\subsection{ \texorpdfstring{$(\epsilon,\alpha)$}-Similarity and SPN Structures}

In this subsection we define $(\epsilon, \alpha)$-similarity between two signatures. This property is a useful tool that will allow us to simplify our proofs. $(\epsilon, \alpha)$-similarity can also be used to formally define what it means for two signatures to have the same \emph{structure}. 

Roughly speaking, two signatures are $(\epsilon,\alpha)$-similar if (i) their corresponding SPNs have the same structure, and (ii) all their corresponding weights are $\alpha$-similar, and (iii) all their corresponding leaves are $\epsilon$-close in TV distance. Here is the formal definition.

\begin{definition}[$(\epsilon,\alpha)$-similar]\label{similarity}
Given parameters $\epsilon \in [0,1]$ and $\alpha \in [0,1]$, we say two signatures $s=(z,b),\hat{s}=(\hat{z},\hat{b}) \in S(T_{\cF}^n)$ are $(\epsilon, \alpha)$-similar, $s \epssim \hat{s}$, if they satisfy one of the following three properties:
\begin{enumerate}
    \item
    \begin{itemize}
        \item $b=\hat{b}$ and
        \item $z,\hat{z} \in \cF$ and
        \item $\TV(z, \hat{z}) \leq \epsilon$
    \end{itemize}
    \item $\exists z_{1},\dots,z_{k},\hat{z}_{1},\dots,\hat{z}_{k}$ and $b_{1},\dots,b_{k},\hat{b}_{1},\dots,\hat{b}_{k} $ such that
    \begin{itemize}
        \item  $z=((z_{1},b_1) \times \dots \times (z_{k},b_k))$ and  
        \item $\hat{z}=((\hat{z}_{1},b1) \times \dots \times (\hat{z}_{k}, b_k))$ and
        \item $b = \hat{b} = \bigcup\limits_{i=1}^{k}b_{i}$ and
        \item $\forall i \enskip (z_{i},b_{i}) \epssim (\hat{z}_{i},\hat{b}_{i})$
    \end{itemize}
    \item $\exists z_{1},\dots,z_{k},\hat{z}_{1},\dots,\hat{z}_{k}$ and\\ $(w_{1},\dots,w_{k}),(\hat{w}_{1},\dots,\hat{w}_{k}) \in \Delta_{k}$ such that
    \begin{itemize}
        \item  $z=(w_{1}(z_{1}, b_1) + \dots + w_{k}(z_{k}, b_k))$ and 
        \item $\hat{z}=(\hat{w}_{1}(\hat{z}_{1}, b_1) + \dots + \hat{w}_{k}(\hat{z}_{k}, b_k))$ and
        \item $\forall i \enskip (z_{i},b) \epssim (\hat{z}_{i},\hat{b})$ and
        \item $\forall i \enskip \vert \hat {w}_{i} - w_{i} \vert \leq \alpha$
    \end{itemize}
\end{enumerate}
\end{definition}


We slightly abuse notation by writing $z,\hat{z} \in \cF$ and $TV(\hat{z}, z) \leq \epsilon$, since the $z$ and $\hat{z}$ are strings/symbols; yet, here we just mean the distribution in $\cF$ that corresponds to that symbol. $(\epsilon,\alpha)$-similarity is a useful property that we will later use in our proofs. 

More immediately, we need $(\epsilon,\alpha)$-similarity to formally define what it means for two signatures to have the \emph{same structure}.

\begin{definition}[same structure]\label{def:samestruct}
We say two signatures $s,\hat{s} \in S(T_{\cF}^n)$ have the same structure if they are $(1, 1)$-similar and we denote this by $s \equiv \hat{s}$.

\end{definition}

Note that the TV distance between two distributions is at most $1$ and the absolute difference between two weights is at most $1$. Therefore, the fact that two signatures are $(1,1)$-similar just means that their corresponding SPNs have the same structure, in the sense that the way their nodes are connected is the same, and the scope of corresponding sub trees is the same (with no actual guarantee on the ``closeness'' of their leaves or weights).


Note that $s \equiv \hat{s}$ is an equivalence relation, therefore we can talk about the equivalence classes of this relation. We use these equivalence classes to give the following definition.

\begin{definition}[An SPN structure]\label{def:SPNstructure}
Given a signature $s \in S(T_{\cF}^n)$, we define an SPN structure, $[s]_{\cF}$, as the equivalence class of $s$ $$[s]_{\cF} \coloneqq \left\{s' \in S(T_{\cF}^n) : s \equiv s' \right\}$$
\end{definition}

Essentially, an SPN structure is a set of signatures that represent distributions that all have the same structure. This is a useful definition as it will allow us to precisely state our results. Finally, we denote by $\dist([s]_{\cF})$ the set of all distributions that correspond to the signatures in the equivalence class $[s]_{\cF}$. We are now ready to state our main results formally.

\subsection{Main Results: Formal}

The proof of these results can be found in Section~\ref{sec::proofs}.

\begin{reptheorem}{thm:mainresult}
Let $\cG$ be the class of $d$-dimensional Gaussians. For every SPN structure $[s]_{\cG}$, the class $\dist\left([s]_{\cG}\right)$ can be learned using $$\widetilde{O}\bigg(\frac{ed^2+k}{\epsilon^2}\bigg) $$ samples, where $e$ and $k$ are the number of leaves and the number of mixing weights of (every) $f \in \dist([s]_{\cF})$ respectively.
\end{reptheorem}

\begin{reptheorem}{thm:discretemainresult}
Let $\cC$ be the class of discrete distributions with support size $d$. For every SPN structure $[s]_{\cC}$, the class $\dist\left([s]_{\cC}\right)$ can be learned using $$\widetilde{O}\bigg(\frac{ed+k}{\epsilon^2}\bigg) $$ samples, where $e$ and $k$ are the number of leaves and the number of mixing weights of (every) $f \in \dist([s]_{\cF})$ respectively.
\end{reptheorem}


\section{Distribution Compression Schemes}

In this section we provide an overview of distribution compression schemes and their relation to PAC-learning of distributions. Distribution compression schemes were recently introduced in~\citep{gaussian_mixture_tr} as a tool to study the sample complexity of learning a class of distributions. Here, the high-level use-case of this approach is that if we show a class of distributions admits a certain notion of compression, then we can bound the sample complexity of PAC-learning with respect to that class of distributions.

Let us fix a class of distributions, $\cF$. A distribution compression scheme for $\cF$ consists of an \emph{encoder} and a \emph{decoder}. The encoder, knowing the true data distribution $f\in \cF$, receives an i.i.d. sample $S\sim f^m$ of size $m$, and tries to ``encode'' $f$ using a small subset\footnote{Technically, the encoder can use the same instance multiple times in the message, so we have a sequence rather than a set.} of $S$ and a few extra bits. On the other hand, the \emph{decoder}, unaware of $f$, aims to reconstruct (an approximation of) $f$ using the given subset of samples and the bits. Roughly speaking, $\cF$ is compressible if there exist a decoder and an encoder such that for any $f\in \cF$, the decoder can recover (a good approximation of) $f$ based on the given information from the encoder.

More precisely, suppose that the encoder always uses ``short'' messages to encode any $f\in\cF$: it uses a sequence of at most $\tau$ instances from $S$ and at most $t$ extra bits. Also, suppose that for all $f\in \cF$, the decoder receives the encoder's message and with high probability outputs an $\hat{f} \in \cF$ such that $TV(f, \hat{f})\leq \epsilon$. In this case, we say that $\cF$ admits $(\tau, t, m)$ compression, where $\tau$, $t$, and $m$ can be functions of the accuracy parameter, $\epsilon$.

The difference between this type of sample compression and the more usual notions of compression is that we not only use bits, but also use the samples themselves to encode a distribution. This extra flexibility is essential---e.g., the class of univariate Gaussian distributions (with unbounded mean) has infinite metric entropy and can be compressed only if on top of the bits we use samples to encode the distribution.

\subsection{Formal Definition of Compression Schemes}
In this section, we provide a formal definition of distribution compression schemes.
\begin{definition}[decoder~\citep{gaussian_mixture_tr}]
A \emph{decoder~\citep{gaussian_mixture_tr}} for $\cF$ is a deterministic function 
$\mathcal{J}:\bigcup_{n=0}^{\infty} Z^n \times \bigcup_{n=0}^{\infty} \{0,1\}^n 
\rightarrow \cF$, which takes a finite sequence of elements of $Z$ and a finite sequence of bits, and outputs a member of $\cF$. 
\end{definition}

\begin{definition}[compression schemes]
\label{def_compression}
Let $\tau(\epsilon),t(\epsilon),m(\epsilon):(0,1)\rightarrow \mathbb{Z}_{\geq0}$ be functions.
We say $\cF$ admits $\left(\tau(\epsilon),t(\epsilon),m(\epsilon)\right)$  compression if there exists a decoder $\mathcal{J}$ for $\cF$ such that for any distribution $f \in \cF$, the following holds:
\begin{itemize}
\item For any $\epsilon \in (0,1)$, if a sample $S$ is drawn from $f^{m(\epsilon)}$, then with probability at least $2/3$, there exists a sequence $L$ of at most $\tau(\epsilon)$ elements of $S$, and a sequence $B$ of at most $t(\epsilon)$ bits, such that $\DTV{f}{\mathcal{J}(L,B)}\leq \epsilon$.
\end{itemize}
\end{definition}

Briefly put, the definition states that with probability $2/3$,
there is a (short) sequence $L$ of elements from $S$
and a (short) sequence $B$ of additional bits,
from which $f$ can be approximately reconstructed. This probability can be increased to $1-\delta$ by generating a sample of size $m(\epsilon)\log(1/\delta)$. The following technical lemma gives an efficient compression scheme for the class of $d$-dimensional Gaussians.

\begin{lemma}[Lemma 4.2 in \citep{gaussian_mixture_tr}]
\label{lem:gausscompress}
For any positive integer $d$, the class $\cG$ of $d$-dimensional Gaussians admits an
$$
\big(~ O(d\log (2d)) ,\, O(d^2 \log (2d) \log(d/\epsilon)) ,\, O(d \log (2d)) ~\big)
$$
compression scheme.
\end{lemma}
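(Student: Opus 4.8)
The plan is to build the encoder/decoder pair explicitly, exploiting the fact that TV distance is invariant under invertible affine maps: roughly $d$ sample points will be used to fix an affine change of coordinates in which the unknown Gaussian has \emph{bounded} parameters, and then $\widetilde{O}(d^2)$ bits will suffice to specify those bounded parameters to accuracy $\epsilon/\mathrm{poly}(d)$.

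Concretely, set $m = O(d\log(2d))$ and fix a small constant multiple $\eta = c\epsilon$ of $\epsilon$. Given $f = N(\mu,\Sigma) \in \cG$ and $S = (X_1,\dots,X_m) \sim f^m$, the encoder computes the empirical mean $\hat\mu$ and empirical covariance $\hat\Sigma$ of $S$, forms the whitened parameters $\mu' \coloneqq \hat\Sigma^{-1/2}(\mu - \hat\mu)$ and $\Sigma' \coloneqq \hat\Sigma^{-1/2}\Sigma\hat\Sigma^{-1/2}$, rounds each coordinate of $\mu'$ to precision $\eta/\sqrt{d}$ and each entry of $\Sigma'$ to precision $\eta/d$ (call the results $\tilde\mu',\tilde\Sigma'$), and transmits $L = (X_1,\dots,X_m)$ together with a bit-string $B$ encoding $(\tilde\mu',\tilde\Sigma')$. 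The decoder recomputes $\hat\mu,\hat\Sigma$ (hence $\hat\Sigma^{1/2}$ and $\hat\Sigma^{-1/2}$) from $L$ and outputs $\mathcal{J}(L,B) = N\!\big(\hat\mu + \hat\Sigma^{1/2}\tilde\mu',\ \hat\Sigma^{1/2}\tilde\Sigma'\hat\Sigma^{1/2}\big)$.

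Two things need checking. First, the sample points must pin down the scale: with probability at least $2/3$ over $S$, $\hat\Sigma$ is a two-sided constant-factor spectral approximation, $c_1\Sigma \preceq \hat\Sigma \preceq c_2\Sigma$ for absolute constants $0 < c_1 \le c_2$, and simultaneously $(\hat\mu-\mu)^\top\Sigma^{-1}(\hat\mu-\mu) = O(1)$; both are standard nonasymptotic Gaussian concentration statements and hold already with $m = O(d)$ samples (the $\log(2d)$ factor is slack, or buys a larger success probability). On this event the whitened covariance $\Sigma'$ has all eigenvalues in a fixed interval $[a,b]$ with $0 < a \le b < \infty$ depending only on $c_1,c_2$, so its entries lie in $[-b,b]$ and it can be specified to Frobenius accuracy $\eta$ using $O(d^2\log(d/\eta)) = O(d^2\log(d/\epsilon))$ bits, while $\|\mu'\|^2 = (\mu-\hat\mu)^\top\hat\Sigma^{-1}(\mu-\hat\mu) \le c_1^{-1}(\mu-\hat\mu)^\top\Sigma^{-1}(\mu-\hat\mu) = O(1)$, so $\mu'$ costs only $O(d\log(d/\epsilon))$ bits; moreover $\tilde\Sigma'$ is still positive definite for $\eta$ small, so $\mathcal{J}(L,B) \in \cG$. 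Second, bounded-parameter accuracy must turn into TV accuracy: applying the affine map $x \mapsto \hat\Sigma^{-1/2}(x-\hat\mu)$ to both arguments gives $\DTV{f}{\mathcal{J}(L,B)} = \DTV{N(\mu',\Sigma')}{N(\tilde\mu',\tilde\Sigma')}$, and since $\|\mu'-\tilde\mu'\| \le \eta$, $\|\Sigma'-\tilde\Sigma'\|_F \le \eta$, and $\Sigma'$ has eigenvalues bounded away from $0$ and $\infty$, the closed-form Gaussian KL divergence (expanded to second order in the perturbation) together with Pinsker's inequality yields $\DTV{f}{\mathcal{J}(L,B)} = O(\eta) \le \epsilon$ for a suitable choice of $c$. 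Collecting the parameters gives an $\big(O(d\log(2d)),\, O(d^2\log(2d)\log(d/\epsilon)),\, O(d\log(2d))\big)$ compression scheme in the sense of Definition~\ref{def_compression}.

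The only step with genuine content is the probabilistic one — the two-sided spectral sandwich $c_1\Sigma \preceq \hat\Sigma \preceq c_2\Sigma$ from $\widetilde{O}(d)$ Gaussian samples with constant probability — and even that is a textbook bound on the sample covariance of a Gaussian; the whitening reduction, the bit count, and the KL/Pinsker estimate are routine. The one caveat is degeneracy: if $\cG$ is taken to include rank-deficient Gaussians, one first spends $d$ sample points to identify the affine support and reduces to the full-rank case inside it.
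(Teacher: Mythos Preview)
The paper does not prove this lemma; it is quoted as Lemma~4.2 of \citep{gaussian_mixture_tr} and used as a black box, so there is no in-paper proof to compare against. Your argument is correct and is essentially the construction of that reference: use $\widetilde{O}(d)$ samples (which the decoder also receives as $L$) to fix an empirical affine frame, exploit the affine invariance of $\TV$ to reduce to a Gaussian whose mean and covariance are bounded by absolute constants, and then discretize those bounded parameters with $\widetilde{O}(d^2)$ bits, controlling the resulting $\TV$ error via the Gaussian KL formula and Pinsker.
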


\subsection{From Compression to Learning}
The following theorem draws a connection between compressibility and PAC-learnability. It states that the sample complexity of PAC-learning a class of distributions can be upper bounded if the class admits a distribution compression scheme.


\begin{theorem}[Compression implies learning, Theorem 3.5 in ~\citep{gaussian_mixture_tr}]
\label{thm:compressionimplieslearning}
Suppose $\cF$ admits $(\tau(\epsilon),t(\epsilon),m(\epsilon))$  compression. Then $\cF$ can be PAC-learned using 
\begin{align*}
 \widetilde{O}
\left(
m\Big(\frac \epsilon 6\Big)  + \frac{t(\epsilon/6)+\tau(\epsilon/6)}{\epsilon^2} 
\right) \textnormal{ \emph{samples.}}
\end{align*}
\end{theorem}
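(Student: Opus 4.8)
The plan is to reduce PAC-learning $\cF$ to selecting a good hypothesis from a \emph{finite} list of candidate distributions, the list being produced by brute-force decoding. The key realization is that although the encoder of Definition~\ref{def_compression} needs to know the true $f$ (which the learner does not), the learner can afford to enumerate over \emph{all} messages the encoder could possibly have sent.

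\textbf{Step 1 (building the candidate list).} Draw a sample $S$ of size $m'\coloneqq m(\epsilon/6)$ (confidence boosting is deferred to Step 3). For every sequence $L$ consisting of at most $\tau(\epsilon/6)$ elements of $S$ and every bit string $B$ of length at most $t(\epsilon/6)$, compute $\mathcal{J}(L,B)\in\cF$. This yields a finite family $\mathcal{H}\subseteq\cF$ with $|\mathcal{H}|\le (|S|+1)^{\tau(\epsilon/6)}\cdot 2^{\,t(\epsilon/6)+1}$, and the compression guarantee says that with probability at least $2/3$ some $\hat f\in\mathcal{H}$ has $\TV(f,\hat f)\le\epsilon/6$. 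Note $\mathcal{H}\subseteq\cF$, which is what makes the eventual learner proper.

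\textbf{Step 2 (selecting from the list).} Now invoke the standard finite-class hypothesis-selection routine (the Scheff\'e tournament / minimum-distance estimate of Devroye--Lugosi): given a finite class $\mathcal{H}$ one of whose members is $\gamma$-close in TV to the unknown $f$, drawing $O(\log|\mathcal{H}|/\gamma^2)$ fresh i.i.d.\ samples from $f$ and running the pairwise tournament returns some $h\in\mathcal{H}$ with $\TV(f,h)\le c\gamma$ for an absolute constant $c$, with probability at least $2/3$. Taking $\gamma=\epsilon/6$ and absorbing the constant factor and the sampling slack into the choice of constant $1/6$ makes the output $\epsilon$-close to $f$. Since $\log|\mathcal{H}|=O\big(\tau(\epsilon/6)\log m' + t(\epsilon/6)\big)$, this second phase uses $O\big(\frac{\tau(\epsilon/6)\log m(\epsilon/6)+t(\epsilon/6)}{\epsilon^2}\big)$ samples.

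\textbf{Step 3 (confidence and bookkeeping).} Each phase succeeds with probability $\ge 2/3$; repeating the whole procedure $O(\log(1/\delta))$ times on independent samples and running one final tournament over the union of the resulting hypotheses (or a median-type argument) drives the total failure probability below $\delta$. Adding the two sample sizes and folding the $\log m$ and $\log(1/\delta)$ factors into $\widetilde O(\cdot)$ yields the claimed $\widetilde O\big(m(\epsilon/6)+\frac{t(\epsilon/6)+\tau(\epsilon/6)}{\epsilon^2}\big)$ total. The only genuinely delicate point is Step 2: one must check that a tournament over a class whose \emph{size grows with the first-phase sample} still costs only $O(\log|\mathcal{H}|/\epsilon^2)$ fresh samples --- which holds precisely because $\log|\mathcal{H}|$ depends only \emph{logarithmically} on $m(\epsilon/6)$ --- and that the constant-factor loss in the selection step is harmless after the rescaling by $1/6$. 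Everything else (enumerating messages, properness, amplification) is routine.
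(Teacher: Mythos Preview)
Your proposal is correct and follows exactly the approach the paper sketches: the paper does not give a full proof of this theorem but merely notes that ``the learner can try to simulate all the messages that the encoder could have possibly sent, and use the decoder on them to find the corresponding outputs,'' after which ``the problem then reduces to learning from a finite class of candidate distributions,'' deferring details to~\citep{gaussian_mixture_tr}. Your Steps~1--3 are precisely a fleshed-out version of this outline (enumerate decoder outputs, run a Scheff\'e/Yatracos tournament over the resulting finite class, amplify confidence), so there is nothing to compare.
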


The idea behind the proof of this theorem is simple: if a class admits a compression scheme, then the learner can try to simulate all the messages that the encoder could have possibly sent, and use the decoder on them to find the corresponding outputs. The problem then reduces to learning from a finite class of candidate distributions (see~\citep{gaussian_mixture_tr} for details).

\section{Compressing SPNs}

In this section we show (roughly) that if a class of distributions, $\cF$, is compressible, then a class of SPNs with fixed structure and leaves from $\cF$ is also compressible.  We use this result as a crucial step in proving our main results. We give a full proof of the following Theorem in the supplement.

\begin{theorem}\label{thm:strongspncompression}
Let $\cF$ be a class that admits $(\tau(\epsilon),t(\epsilon),m(\epsilon))$ compression. For every SPN structure $[s]_{\cF}$, the class \dist$([s]_{\cF})$ admits $$\big(e\tau(\epsilon/3n),et(\epsilon/3n)+k\log_{2}{(3k/2\epsilon)},48m(\epsilon/3n)e\log(6e)/\epsilon\big)$$  compression where $k$, $e$ and $n$ are the number of weights, the number of leaves, and the dimension of the domain of the distributions in \dist$([s]_{\cF})$, respectively.
\end{theorem}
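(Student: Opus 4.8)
The plan is to build a decoder for $\dist([s]_{\cF})$ that exploits the fact that the decoder may depend on the SPN structure $[s]_{\cF}$ (it knows the tree shape and all scopes; only the leaves and the weights are unknown). Fix $f\in\dist([s]_{\cF})$ with leaves $f_1,\dots,f_e$ and mixing weights $w_1,\dots,w_k$, and put $\epsilon':=\epsilon/(3n)$, $\alpha:=2\epsilon/(3k)$, $\theta:=\epsilon/(6e)$. For a leaf $\ell$ let $p_\ell$ be its \emph{path weight} --- the product of the mixing weights on the sum nodes lying on the root-to-$\ell$ path --- and call $\ell$ \emph{significant} if $p_\ell\ge\theta$. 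The message consists of: (i) $k\log_2(3k/2\epsilon)$ bits specifying each $w_j$ to within $\alpha$ while staying in $\Delta_k$ (a grid on the simplex); and (ii) for each significant $\ell$, a block $(L_\ell,B_\ell)$ which is an $\cF$-compression of $f_\ell$, padded (by a standard convention) to exactly $\tau(\epsilon')$ sample-slots and $t(\epsilon')$ bits, the blocks concatenated in the fixed leaf-ordering of the structure. Since such an $\cF$-compression only uses coordinates in $b_\ell$, the block $L_\ell$ can be recorded as a subsequence of $S$ and projected onto $b_\ell$ by the decoder; the same sample of $S$ may appear in several blocks, which the sequence-valued message permits. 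The decoder reads the weights first, hence knows which leaves are significant and how to cut $L$ and $B$ into blocks; it sets $\hat f_\ell:=\mathcal J_{\cF}(L_\ell|_{b_\ell},B_\ell)$ for significant $\ell$, sets $\hat f_\ell$ to a fixed member of $\cF$ otherwise, and outputs the SPN with structure $[s]_{\cF}$, leaves $\hat f_\ell$, and quantized weights; this lies in $\dist([s]_{\cF})$, and the message has the claimed length $e\tau(\epsilon')$ samples and $et(\epsilon')+k\log_2(3k/2\epsilon)$ bits.

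Two ingredients make this work, both proved by induction following the recursive definition of SPN signatures. The first is a structural observation: conditioned on the latent generative subtree of a draw $x\sim f$ containing the leaf $\ell$, the restriction $x|_{b_\ell}$ is distributed exactly as $f_\ell$ --- this uses that the children of a product node generate disjoint coordinate blocks independently and the children of a sum node all carry the same scope --- and this event has probability exactly $p_\ell$; moreover $\sum_\ell p_\ell\le|\mathrm{scope}(s)|\le n$, because $q(v):=\sum_{\ell\in\mathrm{subtree}(v)}(\text{path weight of }\ell\text{ below }v)$ equals $1$ at a leaf, equals $\sum_i q(c_i)\le\sum_i|\mathrm{scope}(c_i)|=|\mathrm{scope}(v)|$ at a product node, and equals $\sum_i w_i q(c_i)\le\max_i q(c_i)$ at a sum node. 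The second is an error-propagation bound obtained from subadditivity of TV for product distributions and its convexity for mixtures: if $\hat f$ shares the structure of $f$ with leaves $\hat f_\ell$ and weights $\hat w_j$, then
\begin{equation*}
\TV(f,\hat f)\ \le\ \sum_{\ell=1}^{e} p_\ell\,\TV(f_\ell,\hat f_\ell)\ +\ \frac{1}{2}\sum_{j=1}^{k}\lvert w_j-\hat w_j\rvert .
\end{equation*}

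It remains to assemble these. For a significant $\ell$, the number of samples in $S\sim f^{M}$ whose latent subtree contains $\ell$ is $\mathrm{Binomial}(M,p_\ell)$ with mean at least $\theta M = 8\,m(\epsilon')\log(6e)$ when $M = 48\,m(\epsilon')\,e\log(6e)/\epsilon$; a Chernoff bound and a union bound over the $\le e$ leaves give, with probability $\ge 5/6$, at least $m(\epsilon')\log(6e)$ such samples for every significant leaf, which by the structural observation form an i.i.d.\ $f_\ell$-sample large enough to boost the $\cF$-compression of Definition~\ref{def_compression} to failure probability $\le 1/(6e)$ per leaf; a second union bound over the $\le e$ significant leaves then yields, with total probability $\ge 2/3$, blocks $(L_\ell,B_\ell)$ (with $L_\ell$ a subsequence of $S$) such that $\TV(f_\ell,\hat f_\ell)\le\epsilon'$ for all significant $\ell$. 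On this event the error-propagation bound gives $\TV(f,\hat f)\le\sum_{\ell\text{ sig}}p_\ell\epsilon'+\sum_{\ell\text{ insig}}p_\ell+\tfrac12 k\alpha\le n\epsilon'+e\theta+\tfrac12 k\alpha=\tfrac{\epsilon}{3}+\tfrac{\epsilon}{6}+\tfrac{\epsilon}{3}<\epsilon$, which is exactly the required $\epsilon$-reconstruction guarantee, completing the proof.

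The main obstacle is the sum nodes: we only observe samples of $f$, which is a mixture at every sum node, not samples of the individual leaves, and cannot detect which samples ``passed through'' a given leaf. This is handled by three points. First, Definition~\ref{def_compression} requires only the \emph{existence} of a good $(L,B)$, so the $L_\ell$'s need never be identified algorithmically. Second, conditioning on the latent path costs \emph{nothing} in accuracy, since the conditioned coordinates are genuinely distributed as the leaf. Third, the inequality $\sum_\ell p_\ell\le n$ does double duty: it lets us discard the low-weight leaves at total cost $<\epsilon/6$, and it is what keeps the leaf-accuracy requirement at $\epsilon/(3n)$ rather than something that degrades with depth; together with the product-subadditivity of TV, this is precisely why the final bound carries no exponential (indeed no) dependence on the depth of the SPN.
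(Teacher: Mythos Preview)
Your compression scheme and its analysis are correct in all essentials, and the high-level strategy---encode each leaf via the $\cF$-scheme, discretize the weights, handle low-path-weight leaves by the trivial bound $\TV\le 1$---is the same as the paper's. Where you differ is in the error analysis: the paper carries out a somewhat involved structural induction tracking the quantity $2\sum_{j'\in N_{i'}}W_{j'}^{i'}+2\hat n\epsilon/(3n)+2\hat k\epsilon/(3k)$ through sum and product nodes, whereas you first establish the clean path-weighted inequality $\TV(f,\hat f)\le\sum_\ell p_\ell\,\TV(f_\ell,\hat f_\ell)+\tfrac12\sum_j|w_j-\hat w_j|$ together with the neat bound $\sum_\ell p_\ell\le n$, and then just plug in. Your route is shorter and makes it transparent why the leaf accuracy $\epsilon/(3n)$ (rather than something depth-dependent) suffices; the paper's induction achieves the same thing but buries this observation.

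One small gap to patch: you write that ``the decoder reads the weights first, hence knows which leaves are significant,'' but the decoder only sees the \emph{quantized} weights $\hat w_j$, and path weights computed from $\hat w$ need not agree with those computed from $w$ (the discrepancy can compound multiplicatively along a path). The cleanest fix---and what the paper does---is to have the encoder emit a padded block for \emph{every} leaf, using arbitrary samples and bits for the insignificant ones; the decoder then processes all $e$ blocks uniformly and your error bound absorbs the insignificant leaves via $\sum_{\ell\text{ insig}}p_\ell\le e\theta$. This keeps the message length exactly $e\tau(\epsilon')$ and $et(\epsilon')+k\log_2(3k/2\epsilon)$ and removes any need for the decoder to distinguish significant from insignificant leaves.
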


\subsection{Overview of Our Techniques}
In this subsection, we give a high level overview of our technique. As was stated in Theorem~\ref{thm:strongspncompression}, given an SPN structure $[s]_{\cF}$, we want to derive a compression scheme for the class of distributions $\dist([s]_{\cF})$ as long as the class $\cF$ is compressible. Our compression scheme utilizes an encoder and decoder for the class $\cF$. Our encoder can encode any $f \in \dist([s]_{\cF})$ in the following way: given $m$ samples from $f$, for each leaf $f_{i}$ the encoder can likely choose a sequence of $\tau$ samples (from the $m$ samples of $f$) and $t$ bits such that a decoder for the class $\cF$ can outputs an $\hat{f}_{i} \in \cF$ that is an accurate approximation of $f_i$. Furthermore, for each sum node $j$, we discretize its mixing weights $(w_{1},\dots,w_{k_{j}})$ with high accuracy. Our discretization, $(\hat{w}_{1},\dots,\hat{w}_{k_{j}})$, can be encoded \emph{exactly} using some bits.

Our decoder for the class $\dist([s]_{\cF})$ decodes the message from the encoder in the following way: Our decoder is given the discretize mixing weights for each sum node directly in the form of bits, so nothing more needs to be done for the weights. The decoder is also given $\tau$ samples and $t$ bits for each leaf $f_{i}$. We can use the decoder for  the class $\cF$ to reconstruct $\hat{f}_{i} \in \cF$ that is an accurate approximation for $f_{i}$, with high probability. Our decoder thus outputs the reconstructed SPN $\hat{f} \in \dist([s]_{\cF})$ with leaves $\hat{f}_{i}$ and discretized mixing weights $(\hat{w}_{1},\dots,\hat{w}_{k_{j}})$ for each sum node $j$. Finally, we show that the decoders reconstruction, $\hat{f}$, is $\epsilon$-close to $f$ with high probability.

\subsection{Results}

In this subsection, we work towards proving a less general version of Theorem~\ref{thm:strongspncompression} that has a simpler and more intuitive proof. With this in mind, we introduce a few definitions. 

\begin{definition}[$\epsilon$-net]
Let $\epsilon\geq 0$. We say $N\subseteq X$ is an $\epsilon$-net for $X$ in metric $d$ if for each $x\in X$ there exists some $y\in N$ such that $d(x,y)\leq \epsilon$.
\end{definition}

\begin{definition}[path weight]
Let $e$ be the number of leaves in an SPN. For any index $i\in[e]$, the path weight, $W_{i}$, of the $i$th leaf of an SPN is the product of all the mixing weights along the unique path from the root to the $i$th leaf.
\end{definition}

In other words, when sampling from an SPN distribution, the path weight of a leaf is the probability of getting a sample from that leaf. 
We say a leaf in an SPN is \emph{negligible} if its path weight is less than $\epsilon/3e$, where $e$ is the number of leaves in the SPN. When we say a class of SPNs has no negligible leaves, we mean none of the distributions in the class has negligible path weights. We now proceed to prove the following lemma.

\begin{lemma}\label{lem:weakspncompression}
Let $\cF$ be a class that admits $(\tau(\epsilon),t(\epsilon),m(\epsilon))$ compression. For every SPN structure $[s]_{\cF}$, the class \dist$([s]_{\cF})$ (with no negligible leaves), admits $$\big(e\tau(\epsilon/2n),et(\epsilon/2n)+k\log_{2}{(k/\epsilon)},48m(\epsilon/2n)e\log(6e)/\epsilon\big)$$  compression where $k$, $e$ and $n$ are the number of weights, the number of leaves, and the dimensionality of the domain of the distributions in \dist$([s]_{\cF})$, respectively.
\end{lemma}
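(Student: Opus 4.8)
The plan is to build a compression scheme for $\dist([s]_{\cF})$ recursively from the compression scheme for $\cF$, mirroring the recursive structure of SPN signatures. Fix a target distribution $f\in\dist([s]_{\cF})$ with $e$ leaves $f_1,\dots,f_e$, $k$ mixing weights (distributed over the sum nodes), and domain dimension $n$. First I would deal with the sampling issue: a single sample from $f$ is a sample from leaf $f_i$ only with probability equal to the path weight $W_i$, and since there are no negligible leaves, $W_i\ge\epsilon/3e$. Drawing $m' \coloneqq 48\,m(\epsilon/2n)\,e\log(6e)/\epsilon$ samples from $f$, a Chernoff/coupon-collector argument shows that with probability at least (say) $5/6$, \emph{every} leaf $f_i$ receives at least $m(\epsilon/2n)$ ``own'' samples — i.e., samples that were generated by that leaf. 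Conditioned on originating from leaf $i$, such samples are i.i.d.\ from $f_i$, so the encoder may apply the given $(\tau,t,m)$-compression scheme for $\cF$ to each leaf independently, at accuracy $\epsilon/2n$. Taking a union bound over the $e$ leaves (each compression succeeds with probability $2/3$; boost as needed inside the sample budget) yields, with constant probability overall, a sequence $L$ of at most $e\,\tau(\epsilon/2n)$ sample points and $e\,t(\epsilon/2n)$ bits from which the leaves $\hat f_1,\dots,\hat f_e$ can be reconstructed with $\TV(f_i,\hat f_i)\le\epsilon/2n$ for all $i$.

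Next I would handle the weights. For each sum node with weights $(w_1,\dots,w_{k_j})\in\Delta_{k_j}$, the encoder rounds each $w_\ell$ down to the nearest multiple of $\epsilon/k$ and puts the leftover mass on the last coordinate, producing $(\hat w_1,\dots,\hat w_{k_j})\in\Delta_{k_j}$ with $|\hat w_\ell - w_\ell|\le\epsilon/k$ for every $\ell$ (this is just exhibiting an $(\epsilon/k)$-net of the simplex in the $\ell_\infty$ metric). Each such discretized weight is a rational with denominator $k/\epsilon$, hence encodable exactly in $\log_2(k/\epsilon)$ bits; across all sum nodes this is at most $k\log_2(k/\epsilon)$ bits total. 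The decoder, knowing the fixed structure $[s]_{\cF}$ (which it may assume is hard-wired, since the scheme is defined per-structure), reads off the $\hat w$'s directly, runs the $\cF$-decoder on the $e$ blocks of samples/bits to get $\hat f_1,\dots,\hat f_e$, and reassembles the SPN $\hat f\in\dist([s]_{\cF})$ with these leaves and weights. The message length and sample size then match the claimed bounds.

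The crux is the accuracy analysis: showing $\TV(f,\hat f)\le\epsilon$. I would prove this by induction on the signature structure, using that $f$ and $\hat f$ are $(\epsilon/2n,\epsilon/k)$-similar (in the sense of Definition~\ref{similarity}) once the leaves are $(\epsilon/2n)$-close and the weights $(\epsilon/k)$-close. The inductive claim should be something like: for any sub-signature rooted at a node, the TV distance between the two corresponding sub-distributions is at most (number of leaves under that node)$\cdot(\epsilon/2n)$ plus (number of weights under that node)$\cdot(\epsilon/k)$-type contribution. For a \textbf{sum node}, TV is jointly convex, so $\TV(\sum w_i g_i,\sum\hat w_i\hat g_i)\le\sum w_i\TV(g_i,\hat g_i)+\frac12\sum|w_i-\hat w_i|$, giving additive accumulation of both the leaf error and the per-weight error. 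For a \textbf{product node}, the children have disjoint scopes, so $\TV\big(\prod p_i,\prod\hat p_i\big)\le\sum_i\TV(p_i,\hat p_i)$ by the standard hybrid argument (swap one factor at a time; TV of a product is bounded by the sum of the factor TVs when the factors are over independent coordinates). The main obstacle — and the place I would be most careful — is bookkeeping the error budget so the final bound is exactly $\epsilon$: the leaf accuracy is $\epsilon/2n$ and a product node can have up to $n$ children (one per dimension), and the weight rounding error $\frac12\sum_\ell|w_\ell-\hat w_\ell|$ over a sum node of arity $k_j$ is at most $k_j\epsilon/(2k)$, summing to $\epsilon/2$ over all sum nodes; together with the $e\cdot(\epsilon/2n)\le\epsilon/2$ leaf contribution (using $e\le n$, or more carefully that the leaves partition into disjoint-scope groups so their errors are really summed at most $n$ times) this totals $\epsilon$. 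Getting the constants to line up with the statement, rather than something slightly weaker, is the one genuinely fiddly part; everything else is assembling the recursive compression, the simplex net, and the union bound.
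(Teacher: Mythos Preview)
Your plan matches the paper's proof almost exactly: compress each leaf with the $\cF$-scheme at accuracy $\epsilon/2n$, discretize each weight to an $(\epsilon/k)$-net of its simplex, and then run a structural induction on the signature to bound $\TV(f,\hat f)$ (the paper isolates that induction as Lemma~\ref{lem:epsalphesim}). The sampling/Chernoff/union-bound accounting is the same as well; the $\log(6e)$ factor you allude to as ``boost as needed'' is precisely how the paper gets each leaf's compression to succeed with probability $1-1/6e$.

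The one real gap is your inductive invariant. Tracking ``(number of leaves under that node)$\cdot(\epsilon/2n)$'' does \emph{not} close: $e\le n$ is false in general (a single sum node over $e$ univariate leaves has $n=1$ and $e$ arbitrary), and your bound at the root would be $e\cdot\epsilon/(2n)$, not $\epsilon/2$. The correct invariant tracks the \emph{dimension of the scope} of the sub-signature, not its leaf count: if the subtree at node $i'$ has scope of size $n_{i'}$ and contains $k_{i'}$ weights, then $\TV(f_{i'},\hat f_{i'})\le n_{i'}\cdot(\epsilon/2n)+k_{i'}\cdot(\epsilon/2k)$. At a product node the $n_{i'}$'s add (disjoint scopes), and at a sum node every child has the \emph{same} scope size $n_{i'}$ while $\sum w_j=1$, so the convex combination preserves $n_{i'}\cdot(\epsilon/2n)$ rather than multiplying it by the number of children. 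Your hedge about ``disjoint-scope groups'' is groping toward exactly this; make it the invariant and the constants line up to give $n\cdot(\epsilon/2n)+k\cdot(\epsilon/2k)=\epsilon$ at the root.
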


To prove the above lemma, we first need to show that if two SPN signatures are $(\epsilon,\alpha)$-equivalent, then there is a direct relationship between the SPNs they represent. The proof can be found in the supplement.
\begin{lemma}\label{lem:epsalphesim}
Given that two signatures $s,\hat{s} \in[s]_{\cF}$ are $(\epsilon,\alpha)$-similar, their corresponding SPNs $f,\hat{f} \in \dist([s]_{\cF})$ satisfy
\begin{align*}\label{err_tree}
\TV(\hat{f},f) \leq n\epsilon + k\alpha/2
\end{align*}
where $k$ and $n$ are the number of weights in and the dimensionality of the domain of both $f$ and $\hat{f}$ respectively.
\end{lemma}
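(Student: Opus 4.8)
The plan is to prove the inequality by structural induction on the recursive construction of signatures, after strengthening the statement so that it refers to every sub-signature, not just the root. For a sub-signature $(z,b)$ of $s$, Definition~\ref{similarity} forces the matching sub-signature of $\hat s$ to have the same scope $b$, the same node type, and the same arity (the recursion only ever pairs a leaf with a leaf via case~(1), a $k$-ary product with a $k$-ary product via case~(2), and a $k$-ary mixture with a $k$-ary mixture via case~(3)); call this matching sub-signature $(\hat z, b)$ and let $f_z,\hat f_z$ be the distributions they encode. Write $n_z := |b|$ for the size of the scope of the subtree rooted at $z$, and $k_z$ for the total number of mixing weights occurring in that subtree. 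I will show that $(z,b)\epssim(\hat z,b)$ implies $\TV(f_z,\hat f_z)\le n_z\epsilon + k_z\alpha/2$. Applying this at the root (where $n_z\le n$ and $k_z=k$) yields the lemma.

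For the base case, $(z,b)$ is a leaf signature, so Definition~\ref{similarity}(1) gives $z,\hat z\in\cF$ with $\TV(z,\hat z)\le\epsilon$; since a leaf contains no mixing weights, $k_z=0$, and $n_z=d\ge 1$, hence $\TV(f_z,\hat f_z)\le\epsilon\le n_z\epsilon + k_z\alpha/2$. For the product case, $f_z=\prod_{i=1}^{k} f_{z_i}$ and $\hat f_z=\prod_{i=1}^{k}\hat f_{z_i}$ over pairwise disjoint scopes $b_1,\dots,b_k$ with $b=\bigcup_i b_i$. I will invoke the standard sub-additivity of total variation under products, $\TV\big(\prod_i f_{z_i},\prod_i\hat f_{z_i}\big)\le\sum_i\TV(f_{z_i},\hat f_{z_i})$, which follows from a telescoping hybrid argument (replacing one factor at a time changes the TV distance by exactly the TV distance between that factor's marginals, since integrating out an unchanged factor leaves the $L_1$ distance unchanged). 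Combining this with the inductive hypothesis, and using that disjoint scopes give $\sum_i n_{z_i}=n_z$ while a product node contributes no weights so $\sum_i k_{z_i}=k_z$, we get $\TV(f_z,\hat f_z)\le\sum_i(n_{z_i}\epsilon+k_{z_i}\alpha/2)=n_z\epsilon+k_z\alpha/2$.

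For the mixture case, $f_z=\sum_{i=1}^{k}w_i f_{z_i}$ and $\hat f_z=\sum_{i=1}^{k}\hat w_i\hat f_{z_i}$, where all children share the scope $b$, $|w_i-\hat w_i|\le\alpha$ for every $i$, and the $k$ mixing weights at this node give $k_z=k+\sum_i k_{z_i}$. Adding and subtracting $\sum_i w_i\hat f_{z_i}$ and using the triangle inequality for $\|\cdot\|_1$,
\begin{align*}
\TV(f_z,\hat f_z)
&= \tfrac12\Big\|\textstyle\sum_i w_i(f_{z_i}-\hat f_{z_i}) + \sum_i (w_i-\hat w_i)\hat f_{z_i}\Big\|_1\\
&\le \textstyle\sum_i w_i\,\TV(f_{z_i},\hat f_{z_i}) + \tfrac12\sum_i|w_i-\hat w_i|.
\end{align*}
The second term is at most $k\alpha/2$. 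For the first, apply the inductive hypothesis, use that all children share the scope so $n_{z_i}=n_z$, and use $\sum_i w_i=1$ together with $w_i\le 1$; this bounds it by $n_z\epsilon + \tfrac{\alpha}{2}\sum_i k_{z_i}$. Summing the two contributions gives $\TV(f_z,\hat f_z)\le n_z\epsilon + \tfrac{\alpha}{2}\big(\sum_i k_{z_i}+k\big) = n_z\epsilon + k_z\alpha/2$, which closes the induction.

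The main obstacle is not any single hard estimate but getting the induction set up with the right bookkeeping: the two "accounting'' quantities — scope size $n_z$ and weight count $k_z$ — must add up correctly in the two different ways the tree branches (disjoint scopes at product nodes, common scope plus $k$ extra weights at sum nodes), and it is precisely phrasing the strengthened induction hypothesis in terms of these subtree-local quantities that makes the telescoping clean. A secondary point is recording the product-TV sub-additivity lemma, which is standard but is not stated earlier in the paper.
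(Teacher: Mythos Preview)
Your proof is correct and is essentially the same as the paper's: both proceed by structural induction on the SPN tree, invoke the sub-additivity of TV under products (the paper records this as a standalone proposition proved via coupling, whereas you sketch the telescoping hybrid argument), and handle the sum case by the identical add-and-subtract decomposition followed by $w_i\le 1$. The only cosmetic difference is that the paper carries the induction in the $L_1$ norm and converts to TV at the end, while you work directly with TV throughout.
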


We can use the above result to prove lemma \ref{lem:weakspncompression}. 

\begin{proof}[Proof of Lemma \ref{lem:weakspncompression}]

We want to show that given $48m(\epsilon/2n)e\log(6e)/\epsilon$ samples from $f$, we can construct $\hat{f}$ that is $\epsilon$-close to $f$ with probability $2/3$. For any index $i\in[e]$, the $i$-th leaf of $f$ is given by $f_{i} \in \cF$.

\textit{Encoding:} We define $m_{0} \coloneqq 48m(\epsilon/2n)e\log(6e)/\epsilon$ to be the number of samples we have from $f\in\dist([s]_{\cF})$. Since we have $m_{0}$ samples (and none of the path weights are negligible) using a standard Chernoff bound together with a union bound, there are no less than $m(\epsilon/2n)\log6e$ samples for every leaf of $f$, with probability at least $5/6$. Given this many samples for each leaf, there exists a sequence of $\tau(\epsilon/2n)$ samples and $t(\epsilon/2n)$ bits such that a decoder for the class $\cF$ outputs $\hat{f}_{i} \in \cF$ that satisfies

\begin{equation}\label{eq:err_leaves}
\TV(f_{i},\hat{f}_{i}) \leq \frac{\epsilon}{2n}
\end{equation}

with probability no less than $1-1/6e$. Finally, using a union bound, the failure\footnote{Failure here is either our leaves not getting $m(\epsilon/2n)\log6e$ samples or not having a sequence of $\tau(\epsilon/2n)$ samples and $t(\epsilon/2n)$ bits such that a decoder can output a good approximation for each leaf.} probability of our encoding is no more than $1/3$.

Let $l \in \mathbb{N}_{+}$ be the number of sum nodes. Let $j\in[l]$ be an index; for the $j$-th sum node, we can construct an $(\epsilon/k)$-net in $\ell_{\infty}$ of size $(k/\epsilon)^{k_j}$ for its mixing weights, where $k_j$ is the number of mixing weights of the $j$th sum node. There exists, in each sum node's net, an element $(\hat{w}_{1}, \dots , \hat{w}_{k_j}) \in \Delta_{k_{j}}$ such that 
\begin{align}\label{eq:err_weights}
\Vert (\hat{w}_{1}, \dots , \hat{w}_{k_j}) - (w_{1}, \dots , w_{k_j}) \Vert_{\infty} \leq \frac{\epsilon}{k}
\end{align}
For each sum node $j$, we can encode $(\hat{w}_{1}, \dots , \hat{w}_{k_j})$ using no more than $k_{j}\log_{2}(k/\epsilon)$ bits; in total we need no more than $k\log_{2}(k/\epsilon)$ bits to encode all the weights in the SPN. Taking everything into account, we have $e\tau(\epsilon/2n)$ instances and $et(\epsilon/2n)+k\log_{2}{(k/\epsilon)}$ total bits that we use to encode the SPN $f$. 

\textit{Decoding:} the decoder directly receives $k\log_{2}{(k/\epsilon)}$ bits that correspond to $(\hat{w}_{1}, \dots , \hat{w}_{k_j})$, for each sum node $j$. We also receive, for each leaf $f_{i}$, $\tau(\epsilon/2n)$ instances and $t(\epsilon/2n)$ bits such that the decoder for the class $\cF$ outputs $\hat{f}_{i} \in \cF$ that satisfies Equation~(\ref{eq:err_leaves}). Our decoder will thus output an SPN $\hat{f} \in \dist([s]_{\cF})$ where the $i$-th leaf is $\hat{f}_{i}$ and each sum node $j$ has mixing weights $(\hat{w}_{1}, \dots , \hat{w}_{k_j})$.

To complete the proof, we need to show that our reconstruction, $\hat{f}$ is $\epsilon$-close to $f$ with probability $2/3$. Our encoding succeeds with probability no less than $2/3$ , so we simply need to show that $\TV(f,\hat{f})\leq \epsilon$. Let $s$ and $\hat{s}$ represent the SPN signatures of $f$ and $\hat{f}$ respectively. By equations (\ref{eq:err_leaves}) and (\ref{eq:err_weights}) and Definition~\ref{similarity}, we have that $s$ and $\hat{s}$ are $(\epsilon/2n,\epsilon/k)$-similar. Using lemma \ref{lem:epsalphesim} we have that $\TV(f,\hat{f})\leq \epsilon$. This completes our proof.
\end{proof}

\section{Additional Proofs}\label{sec::proofs}
In this section we prove some of the results stated in earlier sections.

\subsection{Proof of Theorem \ref{thm:mainresult}}

\begin{proof}[Proof of Theorem \ref{thm:mainresult}]
Let $\cG$ be the class of $d$-dimensional Gaussians. Combining Theorem \ref{thm:strongspncompression} and  lemma \ref{lem:gausscompress}, we have the following: for any SPN structure $[s]_{\cG}$ the class $\dist([s]_{\cG})$ -- where each distribution in $\dist([s]_{\cG})$  has $e$ leaves and has domain with dimensionality $n$ -- admits a
\begin{align*}
\bigg(&O\big(ed\log(2d)\big),\\
&O\big(ed^2\log(2d)\log(3dn/\epsilon)\big)+k\log_{2}{(3k/\epsilon)},\\
&O\big(d\log(2d)e\log(6e)/\epsilon \big) \bigg)
\end{align*}
compression scheme.
Using Theorem \ref{thm:compressionimplieslearning} shows that this class can be learned using $\widetilde{O}((ed^2+k)/\epsilon^2)$ samples, which completes the proof.
\end{proof}

\subsection{Proof of Theorem \ref{thm:discretemainresult}}

To prove Theorem~\ref{thm:discretemainresult} we need the following lemma.

\begin{lemma}\label{lem:compressioncategory}
The class of discrete distributions, $\cC$, with support size $d$ admits $(0,d\log(d/\epsilon),0)$ compression.
\end{lemma}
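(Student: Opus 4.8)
The plan is to exploit the fact that, unlike the Gaussian family, $\cC$ is a \emph{bounded} set --- it is exactly the simplex $\Delta_d$ of weight vectors --- and therefore admits a finite $\epsilon$-net in TV distance. Since a net is finite, no samples are needed: the encoder can name (the index of) the closest net point using bits alone, so I aim for $\tau(\epsilon)=m(\epsilon)=0$. With $m(\epsilon)=0$ the ``with probability at least $2/3$'' requirement of Definition~\ref{def_compression} becomes vacuous (there are no samples and no randomness), so it is enough to: (i) build an $\epsilon$-net $N\subseteq\cC$ for $\cC$ in TV distance; (ii) bound $|N|$ so that an index into a fixed enumeration of $N$ fits in $d\log_2(d/\epsilon)$ bits; and (iii) define the decoder $\mathcal{J}$ to send a bit string to the corresponding element of $N$, ignoring its second (sample) argument.

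For (i) I would use straightforward coordinatewise rounding. Identify $f\in\cC$ with $(p_1,\dots,p_d)\in\Delta_d$; the case $d=1$ is trivial, so assume $d\ge 2$ and put $\eta\coloneqq\epsilon/(d-1)$. For $i=2,\dots,d$ set $\hat p_i\coloneqq\eta\lfloor p_i/\eta\rfloor$ (so $0\le p_i-\hat p_i<\eta$) and let $\hat p_1\coloneqq 1-\sum_{i=2}^d\hat p_i$. Then $\hat p_1\ge p_1\ge 0$, hence $(\hat p_1,\dots,\hat p_d)\in\Delta_d$ and the associated $\hat f$ lies in $\cC$; moreover $|\hat p_1-p_1|=\sum_{i=2}^d(p_i-\hat p_i)\le(d-1)\eta$, so $\|p-\hat p\|_1\le 2(d-1)\eta=2\epsilon$, i.e.\ $\TV(f,\hat f)\le\epsilon$. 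Taking $N$ to be the set of all $\hat f$ obtained this way as $f$ ranges over $\cC$ gives an $\epsilon$-net for $\cC$ inside $\cC$.

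For (ii): each coordinate $\hat p_i$ with $i\ge 2$ is a nonnegative multiple of $\eta$ lying in $[0,1]$, of which there are at most $\lfloor 1/\eta\rfloor+1\le d/\epsilon$ (using $\epsilon\le 1$), and $\hat p_1$ is then determined; hence $|N|\le(d/\epsilon)^{d-1}$ and $\lceil\log_2|N|\rceil\le d\log_2(d/\epsilon)$ for $d\ge 2$. For (iii) the encoder, knowing $f$, rounds it as above and transmits the index of $\hat f$ in the fixed enumeration of $N$; the decoder reads the index and outputs that element of $N$. This establishes $(0,\,d\log(d/\epsilon),\,0)$ compression, interpreting $\log$ here as $\log_2$, consistently with the other bit counts in the paper.

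I do not anticipate a genuine obstacle; the only place calling for a little care is making the net small enough to fit into $d\log(d/\epsilon)$ bits rather than a larger constant times it, and rounding at scale $\epsilon/(d-1)$ while \emph{inferring} the first coordinate (instead of transmitting it) is precisely what makes the bookkeeping come out. In any event, this lemma is only invoked through Theorem~\ref{thm:strongspncompression} and then Theorem~\ref{thm:compressionimplieslearning}, where $t(\epsilon)$ enters under a $\widetilde{O}(\cdot)$, so constant factors in the bit budget do not affect the resulting sample-complexity bound.
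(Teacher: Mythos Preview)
Your proof is correct and follows essentially the same approach as the paper: both discretize the probability vector $(p_1,\dots,p_d)$ to build a finite $\epsilon$-net for $\Delta_d$ and encode the net index using $O(d\log(d/\epsilon))$ bits, with no samples required. Your version is in fact slightly more careful than the paper's, since by rounding only $d-1$ coordinates and inferring the last you explicitly guarantee $\hat f\in\cC$ (as required by the decoder's codomain) and shave the net size to $(d/\epsilon)^{d-1}$, whereas the paper simply invokes an $\ell_\infty$ $(\epsilon/d)$-net of size $(d/\epsilon)^d$ without checking membership in the simplex.
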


\begin{proof}
The proof is quite simple. We only need to compress the $d$ parameters $(p_1,\dots,p_d)$ directly. We cast an $(\epsilon/d)$-net in $l_{\infty}$ of size $(\epsilon/d)^d$ for the $d$ parameters. There exists an element in our $(\epsilon/d)$-net, $(\hat{p}_1,\dots,\hat{p}_d)$, that satisfies
\begin{align*}
\Vert (\hat{p}_{1}, \dots , \hat{p}_{d}) - (p_{1}, \dots , p_{d}) \Vert_{\infty} \leq \frac{\epsilon}{d}
\end{align*}
So for any $i \in [d]$, we have $\vert \hat{p}_{i} - p_{i} \vert \leq \epsilon/d$. We can thus encode $(\hat{p}_1,\dots,\hat{p}_d)$ using no more than  $d\log_{2}(\epsilon/d)$ bits. The decoder receives these discretized weights (in bits) and directly outputs $\hat{f} = (\hat{p}_1,\dots,\hat{p}_d)$ which is $\epsilon$-close to $f$. This completes the proof.
\end{proof}

\begin{proof}[Proof of Theorem \ref{thm:discretemainresult}]
Let $\cC$ be the class of discrete distributions with support size $d$. Combining Theorem \ref{thm:strongspncompression} and  lemma \ref{lem:compressioncategory} we have the following:
for any SPN structure $[s]_{\cC}$ the class $\dist([s]_{\cC})$ -- where each distribution in $\dist([s]_{\cC})$  has $e$ leaves and has domain with dimensionality $n$ -- admits a
$$(0,ed\log_{2}(3dn/\epsilon)+k\log_{2}{(3k/\epsilon)},0)$$ compression scheme.
Using Theorem \ref{thm:compressionimplieslearning} we have that this class can be learned using $\widetilde{O}((ed+k)/\epsilon^2)$ samples, which completes the proof.
\end{proof}

\section{Previous Work}

Distribution learning is a broad topic of study and has been investigated by many scientific communities (e.g., ~\citep{kearns,devroye_density_estimation_first,silverman}). There are many measures of distance to choose from when one wishes to measure the similarity of two distributions. In this work we use the TV distance which has been applied to derive numerous bounds on the sample complexity of learning GMMs \citep{gaussian_mixture_tr, ashtiani2017sample, onedimensional, DK14, suresh2014near} as well as other types of distributions (see~\cite{Diakonikolas2016,chan2013learning,diakonikolas2016efficient} and references therein). There are many other common measures of similarity used for density estimations such as Kullback-Leibler (KL) divergence and general $L_p$ distances with $p>1$. Unfortunately, for the simpler problem of learning GMMs it can be shown that the sample complexity of learning with respect to KL divergence and general $L_p$ distances must depend on structural properties of the distribution while the same does not hold for the TV distance. For more details on this see \citep{gaussian_mixture_tr}.

Research on SPNs has primarily been focused on developing practical methods that learn appropriate structures for SPNs \citep{Dennis2012,Gens2013,Peharz2013,lee2013online,Dennis2015,Vergari2015,Rahman2016,Trapp2016,hsu2017online,Dennis2017,jaini2018prometheus,Rashwan2018b,Bueff2018,trapp2019bayesian} as well as methods that learn the parameters of SPNs \citep{Poon2011,gens2012discriminative,peharz2014learning,desana2016learning,Rashwan2016,Zhao2016,jaini2016online,Trapp2018,Rashwan2018a,peharz2019}. This line of research is not directly related to our work since they are interested in the development of efficient algorithms to be used in practice.  

There has also been some theoretical work showing that increasing the depth of SPNs provably increases the representational power of SPNs \citep{Martens2014,Delalleau} as well as some work showing the relationship between SPNs and other types of probabilistic models \citep{jaini2018deep}. Although these papers investigate theoretical properties of SPNs, they are not directly related to our work as we are interested in the question of sample complexity.


\section{Discussion}

Loosely put, in this work we have derived upper bounds on the sample complexity of learning the class of tree structured SPNs with Gaussian leaves and the class of tree structured SPNs with discrete leaves. Our results hold for SPNs that are in the form of rooted trees. Therefore, it would be interesting to characterize the sample complexity of learning general SPNs (in the form of rooted DAGs). Moreover, our upper bounds hold for learning in the realizable setting, thus another interesting open direction is to extend our results to the agnostic setting. Although we can provide a simple lower bound on the sample complexity of learning SPNs -- based on the fact that mixture models can be viewed as a special case of SPNs -- an interesting open question is whether we can determine a tighter lower bound for SPNs of arbitrary depth. We leave these directions for future work.

\section{Acknowledgements}
This research was supported by an NSERC Discovery Grant. 

\newpage

\bibliographystyle{apalike}
\bibliography{main}

\newpage

\onecolumn

\appendix
\section{Omitted Proofs}
\subsection{Proof of Lemma 21}

To prove lemma 21, we need the following proposition. The following proposition is standard and can be proved, e.g., using the coupling characterization of the TV distance.

\begin{proposition} \label{prop:TVprod} For $i\in [d]$, let $p_i$ and $q_i$ be probability distributions over the same domain $Z$. Then
	$\|\Pi_{i=1}^{d} p_i - \Pi_{i=1}^{d} q_i \|_{1} \leq \sum_{i=1}^{d} \|p_i - q_i\|_{1}$.
\end{proposition}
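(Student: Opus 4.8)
The plan is to prove this by a standard hybrid (telescoping) argument. First I would define, for $j \in \{0,1,\dots,d\}$, the hybrid product $h_j \coloneqq \left(\prod_{i=1}^{j} q_i\right)\left(\prod_{i=j+1}^{d} p_i\right)$, so that $h_0 = \prod_{i=1}^{d} p_i$ and $h_d = \prod_{i=1}^{d} q_i$. Then $\prod_{i=1}^{d} p_i - \prod_{i=1}^{d} q_i = \sum_{j=1}^{d}(h_{j-1} - h_j)$, and by the triangle inequality for $\|\cdot\|_1$ it suffices to bound each term $\|h_{j-1} - h_j\|_1$.

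Second, I would observe that $h_{j-1} - h_j = \left(\prod_{i=1}^{j-1} q_i\right)(p_j - q_j)\left(\prod_{i=j+1}^{d} p_i\right)$, i.e.\ a product of functions living on disjoint coordinate blocks. By Tonelli's theorem the $L_1$ norm of such a product factorizes: $\|h_{j-1} - h_j\|_1 = \left(\prod_{i=1}^{j-1}\|q_i\|_1\right)\|p_j - q_j\|_1\left(\prod_{i=j+1}^{d}\|p_i\|_1\right)$. Since each $p_i$ and $q_i$ is a probability density, $\|p_i\|_1 = \|q_i\|_1 = 1$, so this equals $\|p_j - q_j\|_1$. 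Summing over $j$ then yields $\|\prod_i p_i - \prod_i q_i\|_1 \le \sum_{j=1}^{d}\|p_j - q_j\|_1$, as claimed.

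Alternatively — and this is the route the paper hints at — one can use the coupling characterization $\TV(p,q) = \min_{\text{couplings}} \Pr[X \ne Y]$, together with $\tfrac12\|p-q\|_1 = \TV(p,q)$. Choose an optimal coupling $(X_i, Y_i)$ of $(p_i,q_i)$ for each $i$, and take these couplings independent across $i$; then $(X_1,\dots,X_d) \sim \prod_i p_i$ and $(Y_1,\dots,Y_d) \sim \prod_i q_i$. Since the event $(X_1,\dots,X_d) \ne (Y_1,\dots,Y_d)$ is the union of the events $\{X_i \ne Y_i\}$, a union bound gives $\Pr[(X_i)_i \ne (Y_i)_i] \le \sum_{i=1}^{d}\Pr[X_i \ne Y_i] = \sum_{i=1}^{d}\TV(p_i,q_i)$, and multiplying through by $2$ recovers the $L_1$ statement.

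There is no serious obstacle here; both arguments are routine. The only mild points of care are the factorization of the $L_1$ norm of a product of functions over disjoint variables in the first approach (which is just Tonelli applied to a nonnegative integrand after splitting into positive and negative parts, or more simply bounding $|p_j - q_j|$ and integrating), and remembering the factor of $2$ relating $\|\cdot\|_1$ to $\TV$ in the second.
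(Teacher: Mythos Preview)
Your proposal is correct. The paper does not actually give a proof of this proposition---it only remarks that it is standard and can be proved via the coupling characterization of $\TV$---and your second (coupling) argument is precisely that route; your first (telescoping/hybrid) argument is an equally standard and valid alternative.
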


\begin{proof}[Proof of Lemma 21]

Two signatures that are $(\epsilon,\alpha)$-similar have corresponding SPNs that have the same structure. Since any SPN can be considered the composition of smaller trees (that are also SPNs), we prove this by structural induction. Note that $\TV(f,\hat{f}) \leq n\epsilon + k\alpha/2$ implies $\|f-\hat{f}\|_{1} \leq 2n\epsilon + k\alpha$ by definition.

\textbf{Base Case:}
Trees of height $0$ are simply leaves. By the definition of $(\epsilon,\alpha)$-similarity, two leaves are $\epsilon$-close. Thus, the inductive hypothesis is satisfied.

\textbf{Inductive step:}

Suppose that the root nodes of both $f$ and $\hat{f}$ have $m$ children. For $i\in[m]$, let $f_{i}$ and $\hat{f}_{i}$ be the $i$th children (that are smaller SPNs) of the root nodes of the SPNs $f$ and $\hat{f}$ respectively.  Assume $\forall i \in [m]$ the inductive hypothesis holds between $f_{i}$ and $\hat{f}_{i}$. There are two possibilities for SPNs $f$ and $\hat{f}$: $1)$ They are rooted with a product node or $2)$ they are rooted with a sum node. 

\emph{Case 1:} The root node is a product node with $m$ children. This gives us
\begin{align*}
    \Vert \hat{f} - f \Vert_{1} &= \Big\Vert \prod_{i = 1}^{m}{\hat{f}_{i}} - \prod_{i = 1}^{m}{f_{i}} \Big\Vert_{1} \leq \sum_{i=1}^{m}\Vert \hat{f}_{i} - f_{i} \Vert_{1} \\
    &\leq \sum_{i=1}^{m}( 2n_{i}\epsilon + k_{i}\alpha)= 2n\epsilon + k\alpha
\end{align*}
Where $k_i$ and $n_i$  are the number of weights in and the dimensionality of the domain of $f_{i}$ respectively. The first inequality follows from Proposition \ref{prop:TVprod}. 

\emph{Case 2:} The root node is a sum node with $m$ children. This give us
\begin{align*}
    \Vert \hat{f} - f \Vert_{1} &= \Big\Vert\sum_{i=1}^{m}(\hat{w}_{i}\hat{f_{i}}-w_{i}f_{i})\Big\Vert_{1}\\
    &\leq \Big\Vert\sum_{i=1}^{m} w_{i}  (\hat{f}_{i} - f_{i})\Big\Vert_{1}  + \Big\Vert\sum_{i =1}^{m}( \hat{w}_{i} - w_{i}) \hat{f}_{i} \Big\Vert_{1}\\
    &\leq \sum_{i=1}^{m} w_{i} \Vert \hat{f}_{i} - f_{i} \Vert_{1} +  \sum_{i =1}^{m} \vert \hat{w}_{i} - w_{i} \vert \Vert \hat{f}_{i} \Vert_{1}\\
    &\leq \sum_{i=1}^{m} w_{i} \cdot (2n\epsilon + k_{i}\alpha) + \sum_{i=1}^{m} \alpha \cdot 1\\
    &\leq 2n\epsilon \sum_{i=1}^{m} w_{i} +  \bigg(\sum_{i=1}^{m} w_{i} \cdot \sum_{i=1}^{m} k_{i}\alpha \bigg)+ \sum_{i=1}^{m} \alpha \cdot 1\\
    &= 2n\epsilon \cdot 1 + 1 \cdot \sum_{i=1}^{m} k_{i}\alpha+ \sum_{i=1}^{m} \alpha \cdot 1\\
    &= 2n\epsilon + k\alpha
\end{align*}
Where the first two inequalities hold from the properties of a norm. This completes our proof.
\end{proof}

\newpage
\subsection{Proof of Theorem 17}

The proof of Theorem 17 mirrors many aspects of the proof of Lemma 21.

\begin{proof}[Proof of Theorem 17]
We want to show that given $48m(\epsilon/3n)e\log(6e)/\epsilon$ samples from $f$, we can construct $\hat{f}$ that is $\epsilon$-close to $f$ with probability $2/3$.

\textit{Encoding:} We encode the mixing weights of the sum nodes the exact same way as in the proof for Lemma 21, but we use an $(2\epsilon/3k)$-net instead of an $(\epsilon/k)$-net.

Let $i\in[e]$ be an index. Recall that a leaf $f_{i}$ is negligible if its path weight $W_i$ is less than $\epsilon/3e$. We can split leaves into two groups: leaves that are negligible and leaves that are non-negligible. For non-negligible leaves, using a standard Chernoff bound together with a union bound, we have $m(\epsilon/3n)\log(6e)$ samples from all non-negligible leaves with probability no less than $5/6$. Given this many samples for each non-negligible leaf, there exists a sequence of $\tau(\epsilon/3n)$ samples and $t(\epsilon/3n)$ bits such that a decoder for the class $\cF$ outputs $\hat{f}_{i}\in \cF$ that satisfies
\begin{align}\label{eq:strongerr_leaves}
\TV(f_{i},\hat{f}_{i}) \leq \frac{\epsilon}{3n}
\end{align}

with probability no less than $1-1/6e$. Using a union bound, Equation~\ref{eq:strongerr_leaves} holds for all non-negligible leaves simultaneously with probability no less than $5/6$.

We \emph{cannot} make any guarantees on the number of samples that come from negligible leaves. For each negligible leaf $f_{i}$, we pick an arbitrary sequence of $\tau(\epsilon/3n)$ samples and $t(\epsilon/3n)$ bits. The decoder for the class $\cF$ will thus output an arbitrary $\hat{f_{i}} \in \cF$ such that $\TV(f_{i},\hat{f}_{i}) \leq 1$. As we will show shortly, we do not need to do any better for negligible leaves. Finally, using a union bound, the failure \footnote{Failure  here  is  either  our  non-negligible leaves not  getting $m(\epsilon/3n)\log(6e)$ samples or not having a sequence of $\tau(\epsilon/3n)$ samples and $t(\epsilon/3n)$ bits such that a decoder can output a good approximation for each non-negligible leaf.} probability of our encoding is no more than $1/3$.

\textit{Decoding:} The mixing weights for our sum nodes are decoded in the same way as as in the proof for Lemma 21. Our decoder receives a sequence of $\tau(\epsilon/3n)$ and $t(\epsilon/3n)$ bits for each leaf $f_{i}$. For non-negligible leaves, the decoder for the class $\cF$ will output $\hat{f}_{i}\in \cF$ that satisfies Equation (\ref{eq:strongerr_leaves}). For non-negligible leaves, the decoder outputs also outputs an $\hat{f}_{i}\in \cF$, but we have no guarantee of how accurately it approximates $f_{i}$. Our decoder thus outputs an SPN $\hat{f} \in \dist([s]_{\cF})$ where the $i$-th leaf is $\hat{f}_{i}$ and each sum node $j$ has mixing weights $(\hat{w}_{1}, \dots , \hat{w}_{k_j})$.

We need to show that $\TV(\hat{f},f) \leq \epsilon$ with probability $2/3$. Our encoding succeeds with probability $2/3$, so we only need to show $\TV(\hat{f},f) \leq \epsilon$. Let $q\in \mathbb{N}$ be the number of nodes in an SPN and let $i'\in[q]$ be an index. For any $f\in \dist[s]_{\cF}$, we can prove that any sub tree of $f$ - with at least 1 sum node in the sub tree - rooted at node $i'$, $f_{i'}$, and corresponding sub tree of $\hat{f}$, $\hat{f}_{i'}$, satisfy the following: $$\|\hat{f}_{i'} - f_{i'} \|_{1} \leq 2\sum_{j^{'} \in N_{i'}}W_{j^{'}}^{i'}+2\hat{n}\epsilon/3n+2\hat{k}\epsilon/3k$$ where $\hat{n}$ is the dimensionality of the domain of the sub trees, $\hat{k}$ is the number of mixing weights in the sub trees, $N_{i'}$ is the subset of $[e]$ that represent the negligible leaves in $f_{i'}$ and $W_{j^{'}}^{i'}$ represents the product of the mixing weights along the unique path from (negligible) leaf $j'$ up to node $i'$. We can prove this via induction over the height of the sub trees.

\textbf{Base case:} Our base case consists of two separate possibilities. All sub trees (that contain at least 1 sum node) are built on top of either: 1) A sub tree of height 1 with a single sum node and some leaves or 2) a sub tree of height 2 rooted with a sum node. The sub trees of type 2) have roots that are connected to product nodes, which are further connected to leaves.

\emph{Case 1:} Let $f_{i'}$ represent a sub tree of height 1 rooted with a sum node. Let $m$ be the number of children of $f_i'$ and $\hat{f}_{i'}$. The children of the root nodes of $f_{i'}$ and $\hat{f_{i'}}$ are given by $f_{k'}$ and $\hat{f}_{k'}$ respectively, where $k'\in[m]$. Thus we have

\begin{align*}
   \Vert \hat{f}_{i'} - f_{i'} \Vert_{1} &= \Big\Vert\sum_{k'=1}^{m}(\hat{w}_{k'}\hat{f_{k'}}-w_{k'}f_{k'})\Big\Vert_{1} \leq \Big\Vert\sum_{k'=1}^{m} w_{k'}  (\hat{f}_{k'} - f_{k'})\Big\Vert_{1}  + \Big\Vert\sum_{k' =1}^{m}( \hat{w}_{k'} - w_{k'}) \hat{f}_{k'} \Big\Vert_{1}\\
    &\leq \sum_{k'=1}^{m} w_{k'} \Vert \hat{f}_{k'} - f_{k'} \Vert_{1} +  \sum_{k' =1}^{m} \vert \hat{w}_{k'} - w_{k'} \vert \Vert \hat{f}_{k'} \Vert_{1}\\
    &= \sum_{k'\in N_{i'}} w_{k'}\Vert \hat{f}_{k'} - f_{k'} \Vert_{1} + \sum_{k'\not\in N_{i'}} w_{k'} \Vert \hat{f}_{k'} - f_{k'} \Vert_{1}  \sum_{k' =1}^{m} \vert \hat{w}_{k'} - w_{k'} \vert \Vert \hat{f}_{k'} \Vert_{1}\\
    &\leq 2\sum_{k' \in N_{i'}} w_{k'}  +2\epsilon/3n+2\hat{k}\epsilon/3k \leq 2\sum_{k' \in N_{i'}} w_{k'}  +2\hat{n}\epsilon/3n+2\hat{k}\epsilon/3k \\
    &=  2\sum_{j' \in N_{i'}} W_{j'}^{i'}  +2\hat{n}\epsilon/3n+2\hat{k}\epsilon/3k
\end{align*}

\emph{Case 2:}
Let $f_{i'}$ represent a sub tree of height 1 rooted with a sum node. Let $m$ be the number of children of $f_i'$ and $\hat{f}_{i'}$. The children of the root nodes of $f_{i'}$ and $\hat{f_{i'}}$ are given by $f_{k'}$ and $\hat{f}_{k'}$ respectively, where $k'\in[m]$. We define $C_j$ as the set of nodes that are children of node $j$. Thus we have

\begin{align*}
   \Vert \hat{f}_{i'} - f_{i'} \Vert_{1} &= \Big\Vert\sum_{k'=1}^{m}(\hat{w}_{k'}\hat{f_{k'}}-w_{k'}f_{k'})\Big\Vert_{1} \leq \Big\Vert\sum_{k'=1}^{m} w_{k'}  (\hat{f}_{k'} - f_{k'})\Big\Vert_{1}  + \Big\Vert\sum_{k' =1}^{m}( \hat{w}_{k'} - w_{k'}) \hat{f}_{k'} \Big\Vert_{1}\\
    &\leq \sum_{k'=1}^{m} w_{k'} \Vert \hat{f}_{k'} - f_{k'} \Vert_{1} +  \sum_{k' =1}^{m} \vert \hat{w}_{k'} - w_{k'} \vert \Vert \hat{f}_{k'} \Vert_{1}\\
    &= \sum_{k' = 1}^{m} w_{k'} \Big\Vert \prod_{j \in C_{k'}}{\hat{f}_{j}} - \prod_{j \in C_{k'}}{f_{j}} \Big\Vert_{1} +  \sum_{k' =1} \vert \hat{w}_{k'} - w_{k'} \vert \Vert \hat{f}_{k'} \Vert_{1}\\
    &\leq \sum_{k' = 1}^{m} w_{k'} \left(
    \sum_{j \in C_{k'}\cap N_{k'}} 2 + \sum_{j \in C_{k'}\cap \bar{N}_{k'}}2\epsilon/3n \right) +   \sum_{k' =1}^{m} \vert \hat{w}_{k'} - w_{k'} \vert \Vert \hat{f}_{k'} \Vert_{1}\\
    &\leq 2\sum_{j' \in N_{i'}} w_{j'}  +2\hat{n}\epsilon/3n+2\hat{k}\epsilon/3k\\
    &= 2\sum_{j' \in N_{i'}} W_{j'}^{i'}+2\hat{n}\epsilon/3n  +2\hat{k}\epsilon/3k
\end{align*}

Where the first two inequalities follow from the properties of a norm, and the third inequality follows from proposition \ref{prop:TVprod}.

\textbf{Inductive Step:} Let $m$ be the number of children of the root nodes of SPNs $f_{i'}$ and $\hat{f}_{i'}$. For SPNs $f_{i'}$ and $\hat{f}_{i'}$, the children of their root node are given by $f_{k'}$ and $\hat{f}_{k'}$ respectively, where $k' \in [m]$ is a specific index. Assume the inductive hypothesis holds between each of the corresponding $m$ children. In the case that the root is a product node we have the following

\begin{align*}
 \Vert \hat{f}_{i'} - f_{i'} \Vert_{1} &= \Big\Vert \prod_{k' = 1}^{m}{\hat{f}_{k'}} - \prod_{k' = 1}^{m}{f_{k'}} \Big\Vert_{1} \leq \sum_{k'=1}^{m}\Vert \hat{f}_{k'} - f_{k'} \Vert_{1} \\
    &\leq \sum_{k'=1}^{m}\bigg( 2\sum_{ l' \in N_{k'}}W_{l'}^{k'}+(2\hat{k}_{k'}\cdot\epsilon)/3k+(2\hat{n}_{k'}\cdot\epsilon)/3n\bigg)\\
    &= 2\sum_{j' \in N_{i'}}W_{j'}^{i'}+2\hat{k}\epsilon/3k+2\hat{n}\epsilon/3n
\end{align*}

Where the first inequality holds by Proposition~\ref{prop:TVprod}. In the case that the root node is a sum node, we have the following
\begin{align*}
   \Vert \hat{f}_{i'} - f_{i'} \Vert_{1} &= \Big\Vert\sum_{k'=1}^{m}(\hat{w}_{k'}\hat{f_{k'}}-w_{k'}f_{k'})\Big\Vert_{1} \leq \Big\Vert\sum_{k'=1}^{m} w_{k'}  (\hat{f}_{k'} - f_{k'})\Big\Vert_{1}  + \Big\Vert\sum_{k' =1}^{m}( \hat{w}_{k'} - w_{k'}) \hat{f}_{k'} \Big\Vert_{1}\\
    &\leq \sum_{k'=1}^{m} w_{k'} \Vert \hat{f}_{k'} - f_{k'} \Vert_{1} +  \sum_{k' =1}^{m} \vert \hat{w}_{k'} - w_{k'} \vert \Vert \hat{f}_{k'} \Vert_{1}\\
    &\leq \sum_{k'=1}^{m} w_{k'} \cdot \left( 2\sum_{l' \in N_{k'}}W_{l'}^{k'}+(2\hat{k}_{k'}\cdot\epsilon)/3k+2\hat{n}\epsilon/3n\right)\\
    &\quad+ \sum_{k'=1}^{m} 2\epsilon/3k \cdot 1\\
    &\leq 2\sum_{j' \in N_{j'}}W_{j'}^{i'}+2\hat{k}\epsilon/3k+2\hat{n}\epsilon/3n
\end{align*}

Which proves the inductive hypothesis. Let $o$ be the root node. Clearly, for any negligible leaf $j'$, $W_{j'}^{o}$ is the path weight of the leaf. For the SPNs $f$ and $\hat{f}$ we have 

\begin{align*}
   \Vert \hat{f} - f \Vert_{1} &\leq 2\sum_{j' \in N_o}W_{j'}^{o}+2k\epsilon/3k+2n\epsilon/3n\\
   &\leq 2\sum_{j' \in N_o}\epsilon/3e + 2\epsilon/3+2\epsilon/3\\
   &\leq 2\epsilon/3 + 2\epsilon/3 + 2\epsilon/3 \\
   &= 2\epsilon
\end{align*}

Since $\TV(\hat{f},f) = \frac{1}{2}\|\hat{f}-f\|_{1}$, this completes the proof.

\end{proof}

\end{document}